\newcommand{\EV}{\mathbb{E}}
\newcommand{\Reals}{\mathbb{R}}
\newcommand{\TS}{\textsc{TS}\xspace}
\newcommand{\bp}{\overline{p}}
 \newcommand{\X}{\mathcal{X}}
\newcommand{\F}{\mathcal{F}}
\newcommand{\real}{\mathbb{R}}
\newcommand{\DD}{\mathcal{D}}
\newcommand{\trace}[1]{\mbox{tr}\left(#1\right)}
\newcommand{\DKL}[2]{\DD_{\text{KL}}\left(#1\middle\|#2\right)}
\newcommand{\II}[1]{\mathbb{I}_{\left\{#1\right\}}}
\newcommand{\PP}[1]{\mathbb{P}\left[#1\right]}
\newcommand{\EE}[1]{\mathbb{E}\left[#1\right]}
\newcommand{\EEs}[2]{\mathbb{E}_{#2}\left[#1\right]}
\newcommand{\PPt}[1]{\mathbb{P}_t\left[#1\right]}
\newcommand{\EEt}[1]{\mathbb{E}_t\left[#1\right]}
\newcommand{\PPc}[2]{\mathbb{P}\left[#1\left|#2\right.\right]}
\newcommand{\PPcc}[2]{\mathbb{P}\left[\left.#1\right|#2\right]}
\newcommand{\PPcct}[2]{\mathbb{P}_t\left[\left.#1\right|#2\right]}
\newcommand{\EEc}[2]{\mathbb{E}\left[#1\left|#2\right.\right]}
\newcommand{\EEcct}[2]{\mathbb{E}_t\left[\left.#1\right|#2\right]}
\def\argmin{\mathop{\mbox{ arg\,min}}}
\newcommand{\ra}{\rightarrow}
\newcommand{\siprod}[2]{\langle#1,#2\rangle}
\newcommand{\iprod}[2]{\left\langle#1,#2\right\rangle}
\newcommand{\norm}[1]{\left\|#1\right\|}
\newcommand{\abs}[1]{\left|#1\right|}
\newcommand{\ev}[1]{\left\{#1\right\}}
\newcommand{\pa}[1]{\left(#1\right)}
\newcommand{\bpa}[1]{\bigl(#1\bigr)}
\newcommand{\Bpa}[1]{\Bigl(#1\Bigr)}
\newcommand{\wh}{\widehat}
\newcommand{\wt}{\widetilde}
\newcommand{\loss}{\ell}
\newcommand{\bloss}{\overline{\loss}}
\newcommand{\htheta}{\wh{\theta}}
\newcommand{\ttheta}{\wt{\theta}}
\newcommand{\transpose}{^\mathsf{\scriptscriptstyle T}}
\definecolor{PalePurp}{rgb}{0.66,0.57,0.66}
\newcommand{\IR}{lifted information ratio\xspace}
\newtheorem{thm}{Theorem}
\newtheorem{lemma}{Lemma}
\newtheorem{cor}{Corollary}
\newtheorem{proposition}{Proposition}
\newtheorem{assumption}{Assumption}
\title{Lifting the Information Ratio: \\An Information-Theoretic Analysis of\\ Thompson Sampling for Contextual  Bandits}
\author{%
  Gergely Neu \\
  Universitat Pompeu Fabra\\
  Barcelona, Spain \\
  \texttt{gergely.neu@gmail.com}
  \And
  Julia Olkhovskaya \\
  Vrije Universiteit Amsterdam \\
  Amsterdam, the Netherlands\\
  \texttt{julia.olkhovskaya@gmail.com}
  \And
  Matteo Papini \\
  Universitat Pompeu Fabra \\
  Barcelona, Spain \\
  \texttt{matteo.papini@upf.edu}
  \And
  Ludovic Schwartz \\
  Universitat Pompeu Fabra \\
  Barcelona, Spain \\
  \texttt{ludovic.v.schwartz76@gmail.com}
}
\begin{document}

\maketitle

\begin{abstract}
  We study the Bayesian regret of the renowned Thompson Sampling algorithm in contextual bandits with binary losses and adversarially-selected contexts. We adapt the information-theoretic perspective of~\cite{RvR16} to the contextual setting by considering a lifted version of the information ratio defined in terms of the unknown model parameter instead of the optimal action or optimal policy as done in previous works on the same setting. This allows us to bound the regret in terms of the entropy of the prior distribution through a remarkably simple proof, and with no structural assumptions on the likelihood or the prior. The extension to priors with infinite entropy only requires a Lipschitz assumption on the log-likelihood. An interesting special case is that of logistic bandits with $d$-dimensional parameters, $K$ actions, and Lipschitz logits, for which we provide a $\widetilde{O}(\sqrt{dKT})$ regret upper-bound that does not depend on the smallest slope of the sigmoid link function.
\end{abstract}

\section{Introduction}
Thompson sampling is one of the most popular algorithms for sequential decision making under uncertainty. First 
proposed by \citet{Tho33}, it has been rediscovered several times over the consequent decades and has been eventually 
popularized in the machine learning literature by \citet{CL11} and \citet{Sco10}, who pointed out its excellent 
empirical performance for solving contextual bandit problems. These empirical studies were followed by a 
sequence of breakthroughs on the front of theoretical analysis, spearheaded by the works of \citet{AG12,AG13,AG13b}, 
\citet{KKM12}, and \citet{RvR14,RvR16}. Thanks to these successes, Thompson sampling has become one of 
the gold-standard methods for solving multi-armed bandit problems. Indeed, in the last decade, several 
Thompson-sampling-style methods have been developed and analyzed for a variety of problem settings. 

The variety of different analysis techniques applied to Thompson sampling is perhaps even larger than the variety of 
problem settings that it has been applied to. The first key tools for analyzing the Bayesian regret of Thompson sampling 
for multi-armed bandits have been developed by \citet{RvR14}, and our analysis naturally borrows several of these tools. 
The worst-case results developed in said work were refined in~\citep{RvR16}, where they proved for the first time 
``information-theoretic'' bounds on the regret of TS that scale with the Shannon entropy of the optimal action $A^*$ 
under the prior on the model parameters. This result has inspired a range of follow-up works, including extensions to 
uncountable action sets \citep{DvR18}, approximate implementations \citep{LvR17,QWZLvR22}, and even new algorithms based 
on the analysis technique itself \citep{RvR17,KK18}. One limitation that this technique could not overcome so far is not 
being able to satisfyingly deal with \emph{context}.

When considering i.i.d.~contexts and finite policy classes, one can apply the theory of \citet{RvR16} and treat 
policies as actions (as done by \citealt{LZS18,BS20}), or adapt the information-ratio directly (as done by \citealt{KLK20,HLQ22}) to obtain regret bounds scaling with the entropy of the optimal policy. This approach, however, can lead to a polynomial dependence on the number of contexts when assuming no further structure about the rewards, as the entropy of the optimal policy may be as large as $\log |\Pi| = |\mathcal{X}| \log |\mathcal{A}|$.
Another variant of Thompson sampling demonstrating a similar prior-dependent regret bound has been proposed by \citet{Li13}, whose regret 
guarantees also suffer from a suboptimal dependence on the number of rounds $T$. A much more satisfying solution has 
been recently given by \citet{Zha21}, whose ``feel-good Thompson sampling'' method guarantees both frequentist and 
Bayesian regret bounds of the order $\sqrt{KT\log N}$, where $K$ is the number of actions, $T$ the number of rounds, 
and $N$ the support size of the prior distribution on model parameters $\theta^\star$. Under a Lipschitzness 
assumption on the prior and the likelihood, \citeauthor{Zha21} proves a frequentist regret bound scaling with the log-prior-probability mass assigned to the true parameter $\theta^\star$. The techniques involved in proving these results 
drew substantial inspiration from the works of \citet{FR20,FK21} and in fact results of a similar flavor were also 
proved recently in \citet{FKQR21}.

Our own approach can be seen as a reconciliation of the analysis style of \citet{Zha21} with the information-theoretic 
methodology of \citet{RvR16}. Our main conceptual contribution is proposing an adjustment of the now-classic notion of 
``information ratio'' proposed by \citet{RvR16} that applies to contextual bandits. In its original definition,  
the information ratio quantifies the tradeoff between incurring low regret  and gaining information about the optimal 
action. As we will argue, this notion of information gain is inappropriate for contextual bandits with non-i.i.d.~contexts. We propose a variant that measures the amount of information gained about the true model parameter $\theta^\star$ instead of the optimal action (which may be context dependent). The complexity notion resulting from this extension is called the ``lifted information ratio''. Our analysis shows that the Bayesian regret of Thompson sampling can be bounded in terms of the lifted information ratio and the Shannon entropy of the hidden parameter $\theta^\star$, which mirrors the result of \citet{RvR16}. Along the way, we draw inspiration from the recently proposed analysis technique of \citet{Zha21} for contextual bandits, and in fact we show that our notion of lifted information ratio bridges the concept of ``decoupling coefficient'' proposed by \citeauthor{Zha21} with the information ratio of \citeauthor{RvR16}.

We state our main results in the context of $K$-armed contextual bandits with binary losses. For countable parameter 
spaces, we prove that the Bayesian regret of Thompson sampling satisfies a bound of order $\sqrt{KTH(\theta^*)}$, where 
$H$ denotes the Shannon entropy. This result is comparable to the bound of \citet{FK21} for the FastCB algorithm, which is of the order 
$\sqrt{KT\log |\Theta|}$ and holds in a frequentist sense. This is the best result we are aware of for this setting. To 
demonstrate the flexibility of our technique, we provide an extension to logistic bandits with 
Lipschitz continuous logits, generalizing the well-studied setting involving logits that are linear functions of the 
context and the parameter $\theta$. For this setting, we prove a regret bound of order $\sqrt{KT \log 
\mathcal{N}_{1/CT}(\Theta,\norm{\cdot})}$, where $\mathcal{N}_{\varepsilon}(\Theta,\norm{\cdot})$ is the 
$\varepsilon$-covering number of $\Theta$ under norm $\norm{\cdot}$ and $C$ is the Lipschitz constant of the logits. 
This implies a regret bound of order $\sqrt{KdT}$ in the well-studied case of linear logits.
Notably, the bound does not show any dependence 
on the smallest slope $\kappa$ of the sigmoid link function that almost all existing results for this setting suffer 
from \citep{FCGS10,KZSzLGB20,AFC21,FAJC22}. Indeed, this constant has plagued all regret bounds since the early work 
of \citet{FCGS10} and was only recently moved to lower-order terms by the breakthrough work of \citet{AFC21}. Bounds 
involving other potentially large problem-dependent constants have also been proved in the Bayesian setting by 
\citet{DvR18} and \citet{DMvR19}. To our knowledge, our bounds are the first to entirely remove this 
factor.\looseness=-1\footnote{Despite our best efforts, 
we could not verify how the bounds of \citet{Zha21} scale with problem-dependent factors in this setting, due to the 
heavy use of asymptotic notation in their proofs.}

The rest of the paper is organized as follows. After introducing the necessary technical background in 
Section~\ref{sec:preli}, we discuss matters of information gains, information ratios, and decoupling coefficients in 
Section~\ref{sec:rambling}. We state our main results and instantiate them in a variety of settings in 
Section~\ref{sec:results}. We provide the 
key ideas of the analysis in Section~\ref{sec:theory} and conclude in Section~\ref{sec:conc}.

\paragraph{Notation.} For a natural number $n\in\mathbb{N}$,  $[n]=\{1,2,\dots,n\}$ denotes the set of the 
first $n$ natural numbers. 
For $x, y \in \mathbb{R}^d$, $\siprod{x}{y}$ denotes the canonical 
scalar product of $x$ and $y$, and $\norm{x}$ the Euclidean norm of $x$. We denote the Shannon entropy of a discrete 
random variable $Z$ with probability mass function $P$ as $H(Z)=-\EE{\log(P(Z))}$. We use $\bar{0}_d$ for a 
$d$-dimensional vector of zeros and $I_d$ for the $d\times d$ identity matrix.

\section{Preliminaries}\label{sec:preli}

We consider a parametric class of contextual bandits with parameter space $\Theta$, context space $\X$, and $K$ 
actions. To each parameter $\theta\in\Theta$ there corresponds a contextual 
bandit with loss distribution $P_{\theta,x,a}$ for each context $x\in\X$ and action $a\in\mathcal{A}$, with the mean of 
the loss distribution denoted by $\ell(\theta,x,a)$. We will dedicate special attention to the case where the losses 
are binary and thus $P_{\theta,x,a}$ is a Bernoulli distribution with parameter $\ell(\theta,x,a)$.
For the main part of our theoretical analysis, we will assume $\Theta$ is either a finite set or a bounded metric 
space. 

We study the problem of regret minimization in the Bayesian setting. In this setting, the environment secretly samples 
a parameter $\theta^\star$ from a known prior distribution $Q_1$ over $\Theta$. We assume that the agent has full 
knowledge of the prior and the likelihood model $P_{\theta,x,a}$. The agent interacts with the environment for 
$T$ rounds as follows. At each round $t\in[T]$, an adaptive adversary selects a context $X_t$, possibly using 
randomization and taking into account the previous history of actions and losses, but not $\theta^\star$. The latter is a common assumption~\cite[e.g.,][]{AG13b}. After 
observing the context $X_t$, the agent selects an action $A_t\in\mathcal{A}$ (possibly using randomization) and incurs a 
binary loss $L_t\sim P_{\theta^\star, X_t, A_t}$.
The goal of the agent is to minimize the expected sum of losses. In the Bayesian setting, this is equivalent to 
minimizing the Bayesian regret, defined as follows:
\begin{equation}\label{eq:bayes_regret}
	R_T = \EE{\sum_{t=1}^{T}\pa{\loss(\theta^\star, X_t, A_t) - \loss(\theta^\star, X_t, A^\star_t)}},
\end{equation}
where $A^\star_t=\argmin_{a\in\mathcal{A}}\loss(\theta^\star,X_t,a)$ is the optimal action for round $t$, and the 
expectation in~\eqref{eq:bayes_regret} is over all sources of randomness: the initial sampling of $\theta^\star$ from 
$Q_1$, the agent's randomization over actions, and the randomness of the loss realizations. 

Furthermore, let 
$\F_t=\sigma(X_1,A_1,L_1,\dots,X_t,A_t,L_t)$ be the sigma-algebra representing the history of contexts, actions and 
losses observed by the agent up to time $t$ included. 
We use $Q_t$ to denote the distribution of the unknown parameter $\theta^\star$ conditional on the past history 
$\F_{t-1}$, and simply call it the posterior distribution.
We denote by $\pi_t(\cdot\vert X_t)$ the distribution over the agent's actions conditional on $X_t$ and 
$\F_{t-1}$, and call it the agent's policy. Finally, we will frequently use the shorthand notations $\EEt{\cdot} = 
\EEc{\cdot}{\F_{t-1},X_t}$ and $\PPt{\cdot} = \PPc{\cdot}{\F_{t-1},X_t}$.

This paper is dedicated to the study of the celebrated Thompson Sampling (\TS) algorithm, defined as follows.
At each round $t$, \TS draws a parameter $\theta_t$ from the posterior distribution $Q_t$. Then, it selects the action 
that minimizes $\ell(\theta_t,X_t,\cdot)$. Finally, it updates $Q_t$ via Bayes' rule, 
obtaining the new posterior $Q_{t+1}$. The algorithm can be equivalently defined as a method that plays actions 
according to their posterior probability of being optimal, that is: $\PP{A_t=a | \F_{t-1}, X_t} = 
\PP{A^\star_t=a|\F_{t-1}, X_t}$. The pseudocode is shown as Algorithm~\ref{alg:ts}. 

\begin{algorithm}
	\caption{Thompson Sampling (\TS)}
	\label{alg:ts}
	\begin{algorithmic}[1]
		\State \textbf{Input:} prior $Q_1$
		\For{$t=1,2,\dots,T$}
		\State Sample $\theta_t \sim Q_t$
		\State Observe context $X_t$
		\State Play action $A_t=\arg\min_{a\in\mathcal{A}} \ell(\theta_t,X_t,a)$
		\State Incur loss $L_t$
		\State Update $Q_t$ with $(X_t,A_t,L_t)$ to obtain $Q_{t+1}$ by Bayes' rule
		\EndFor
	\end{algorithmic}
\end{algorithm}

\section{Regret, information ratio, and decoupling coefficient}\label{sec:rambling}
The classic results of \citet{RvR16} establish that the regret of Thompson sampling in non-contextual multi-armed 
bandit problems can be upper bounded in terms of a quantity called the \emph{information ratio}. Informally, the 
information ratio measures the tradeoff between achieving low regret and gaining information about the identity of the 
optimal action $A^\star$ (which is a deterministic function of $\theta^\star$ in the standard multi-armed bandit 
setting). The 
formal definition is given by 
\[
 \rho^*_t = \frac{\pa{\EEt{\ell(\theta^\star,A_t) - \ell(\theta^\star,A^\star)}}^2}{I_t(A^\star; (A_t,L_t))},
\]
where $I_t(A_t^\star; (A_t,L_t))$ denotes the mutual information between $A^\star$ and the action-observation pair 
$(L_t,A_t)$, 
conditioned on the history $\F_{t-1}$. Intuitively, having small information ratio implies that every time Thompson 
sampling suffers large regret, it has to gain a lot of information about the optimal action, which suggests that it 
should be possible to bound the total regret by the total amount of information that there is to be gained. The result 
of \citet{RvR16} confirms this intuition by showing that the regret of Thompson sampling is of the order 
$\sqrt{\rho^* T H(A^\star)}$, where $\rho^* = \max_t \rho^*_t$ and $H$ is the Shannon entropy.
The information ratio itself can always be upper bounded by $K/2$, but better bounds can be shown when the loss function 
has favorable structural properties (e.g., whenever the reward function is a $d$-dimensional linear function, the 
information ratio is at most $d/2$).

While this result and the underlying information-theoretic framework is very elegant, it is inappropriate for studying 
contextual bandit problems. The specific challenge is that the optimal action $A_t^\star$ changes from round to round 
and gaining large amount of information about $A_t^\star$ for any given round may not necessarily be useful for 
predicting 
future actions. To see this, consider a stylized example with action set $\ev{1,2,\dots,K}$, where there exists an 
action $a_{\text{reveal}}=1$ whose loss entirely reveals the identity of the optimal action $a^\star(x)$ for context 
$x$: 
$\ell(\theta^\star,x,a_{\text{reveal}}) = 1-10^{-a^\star(x)}$. Taking this action provides large information gain 
about $A_t^\star$, but results in large regret and reveals nothing about the future losses. Thus, in the contextual setting, one can keep following a policy that provides low information ratio while suffering linear regret. This issue necessitates an alternative definition that still permits an effective information-theoretic analysis.

Our proposition is to consider a relaxed definition of the information ratio based on the mutual information between 
the \emph{true parameter $\theta^\star$} and the observed loss. In particular, we define 
\begin{equation}\label{eq:RIR}
 \rho_t = \frac{\pa{\EEt{\ell(\theta^\star,X_t,A_t) - \ell(\theta^\star,X_t,A_t^\star)}}^2}{I_t(\theta^\star; L_t)},
\end{equation}
where $I_t(\theta^\star; L_t) = \EEt{\DKL{P_{L_t|\theta^\star,\F_{t-1},X_t,A_t}}{P_{L_t|\F_{t-1},X_t,A_t}}}$ is the 
mutual 
information between $\theta^\star$ and $L_t$, conditioned on the history $\F_{t-1}$ and the context-action pair 
$X_t,A_t$. 
This quantity measures the information that the agent gains about $\theta^\star$. High values of $I_t(\theta^\star; 
L_t)$ intuitively allow making better predictions about the future loss realizations for all possible context sequences. 
Since $A_t^\star$ is a deterministic function of $\theta^\star$ given $X_t$, the data processing inequality implies that 
the information gain about $\theta^\star$ is always greater than that about $A_t^\star$, which in turn implies that $\rho_t$ is smaller than what one would 
obtain by directly generalizing the definition of \citet{RvR16}. As this notion of information gain measures the 
efficiency of inferring the identity of a hidden parameter, we refer to $I_t(\theta^\star; L_t)$ as the \emph{lifted 
information gain}, and $\rho_t$ as defined in Equation~\eqref{eq:RIR} as the \IR. As our analysis will establish, a bounded \IR guarantees low regret, and we will show that the ratio itself can be bounded reasonably under conditions similar to the ones required by the analysis of \citet{RvR16}, despite the fact that all quantities are defined conditionally on a sequence of non-i.i.d.~contexts.

We are not the first to consider the \IR: very similar quantities have been considered in the literature on \emph{information-directed sampling}, as initiated by~\citet{RvR17}, who have already introduced an the non-contextual counterpart of the same quantity in their Section 9.2. The recent works of \citet{KLK20,HLQ22} also consider versions of the standard information ratio for contextual bandits, but their analysis crucially relies on the assumption that the contexts are i.i.d. Similarly, \citet{LvRDIOW21,LDvRX22} define versions of the information ratio using various \emph{targets} for information-directed sampling, which recovers our definition of the \IR when setting the target as $\theta^\star$. While these works do not distinguish between all these versions of the information ratio on the level of terminology, we believe that emphasizing the ``lifted'' nature of our information ratio provides more clarity in that it indicates the specific features of contextual bandit learning with parametric losses in a transparent way. Thus, we will keep using this terminology throughout the paper.

Our \IR is also closely related to a quantity appearing in the analysis of \citet{Zha21}, called 
the ``decoupling coefficient''. Adapted to our Bayesian setting, this coefficient can be defined as the smallest 
constant $\delta$ such that the following inequality holds:
\begin{align*}
 \EEt{\ell(\theta^\star,X_t,A_t) - \ell(\theta^\star,X_t,A^\star_t)} &\le \inf_{\mu > 0} 
\EEt{\mu \pa{\loss(\theta^\star,X_t,A_t) - \EEcct{\ell(\theta^\star,X_t,A_t)}{X_t,A_t}}^2 + \frac{\delta}{\mu}}
\\
&= 2\sqrt{\delta \EEt{\pa{\loss(\theta^\star,X_t,A_t) - \EEcct{\ell(\theta^\star,X_t,A_t)}{X_t,A_t}}^2}},
\end{align*}
where the first line gives the original definition mirroring that of \citet{Zha21} and the second line plugs in the 
choice of $\mu$ achieving the infimum. Reordering gives the value of the optimal $\delta^*_t$:
\[
 \delta^*_t = \frac{\pa{\EEt{\ell(\theta^*,X_t,A_t) - 
\ell(\theta^*,X_t,A_t^*)}}^2}{4\EEt{\pa{\loss(\theta^*,X_t,A_t) - \EEcct{\loss(\theta^*,X_t,A_t)}{X_t,A_t}}^2}},
\]
which matches our definition of the \IR, up to the difference of replacing the mutual information 
by the root mean-squared error in predicting the true parameter $\theta^*$. Notably, this definition essentially
coincides with the \IR for the special case of Gaussian losses.\looseness-1

\section{Main results}\label{sec:results}
In this section, we state our main results concerning the Bayesian regret of Thompson sampling for contextual bandits. 
We will assume that the losses are binary and the action space is finite, unless otherwise stated. However, several 
of our results can be generalized beyond this setting. We will illustrate this in Section~\ref{sec:gaussian}, where 
we provide some additional results for the classic setting of Gaussian linear contextual bandits.

We begin by stating two general regret bounds in terms of the \IR defined in Equation~\eqref{eq:RIR}. 
The reader that is not interested in the full generality of our theory may skip to Section~\ref{sec:concrete} for concrete regret bounds.
Our first abstract bound applies to priors with finite entropy, the simplest example being finite parameter spaces. 
\begin{thm}\label{th:main_discrete}
	Assume $Q_1$ is supported on the countable set $\Theta_1 \subseteq \Theta$ and that the \IR for all rounds $t$ satisfies $\rho_t \le \rho$ for some $\rho > 0$. Then, the Bayesian regret of \TS after $T$ 
	rounds can be bounded as
	\begin{equation*}
	R_T \le \sqrt{\rho TH(\theta^\star)}.
	\end{equation*}
	 In particular if $\Theta_1$ is a finite set with $|\Theta_1|=N$, the regret of \TS satisfies
	\begin{equation*}
	R_T \le \sqrt{\rho T\log N}. 
	\end{equation*}
\end{thm}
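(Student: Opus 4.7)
The plan is to follow the classical information-theoretic template of \citet{RvR16}, but with the \IR $\rho_t$ and the lifted information gain $I_t(\theta^\star;L_t)$ replacing the original quantities. The first step is purely definitional: since $\rho_t \le \rho$ by assumption, the definition in Equation~\eqref{eq:RIR} gives
\[
\EEt{\ell(\theta^\star, X_t, A_t) - \ell(\theta^\star, X_t, A_t^\star)} = \sqrt{\rho_t \cdot I_t(\theta^\star; L_t)} \le \sqrt{\rho \cdot I_t(\theta^\star; L_t)}.
\]
Taking full expectation, summing over $t \in [T]$, and applying the Cauchy--Schwarz inequality yields
\[
R_T \;\le\; \EE{\sum_{t=1}^T \sqrt{\rho \cdot I_t(\theta^\star; L_t)}} \;\le\; \sqrt{\rho T \cdot \EE{\sum_{t=1}^T I_t(\theta^\star; L_t)}}.
\]

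The core of the argument is then to show $\sum_{t=1}^T \EE{I_t(\theta^\star;L_t)} \le H(\theta^\star)$ via the chain rule of mutual information. The expected lifted information gain $\EE{I_t(\theta^\star;L_t)}$ coincides with the conditional mutual information $I(\theta^\star; L_t \mid \F_{t-1}, X_t, A_t)$. Two independence observations let us enrich this to the mutual information with the whole triple $(X_t, A_t, L_t)$: (i) the adversary selects $X_t$ without access to $\theta^\star$, giving $I(\theta^\star; X_t \mid \F_{t-1}) = 0$; and (ii) under \TS, $A_t$ is a deterministic function of $X_t$ and the independent posterior draw $\theta_t \sim Q_t$, so conditional on $\F_{t-1}$ it is independent of $\theta^\star$, giving $I(\theta^\star; A_t \mid \F_{t-1}, X_t) = 0$. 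The chain rule therefore telescopes:
\[
\sum_{t=1}^T \EE{I_t(\theta^\star; L_t)} = \sum_{t=1}^T I(\theta^\star; X_t, A_t, L_t \mid \F_{t-1}) = I(\theta^\star; \F_T) \le H(\theta^\star),
\]
where the last inequality uses that mutual information is bounded by the entropy of the discrete variable $\theta^\star$, which is finite by the countable-support hypothesis. Plugging this into the previous display gives the first claim.

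The hard part is the telescoping step: it hinges entirely on the two independence properties above, both of which are specific to the Bayesian setting and to the TS sampling rule (an independent draw from the posterior). Everything else is routine manipulation. The second bound in the theorem is then immediate from the elementary inequality $H(\theta^\star) \le \log|\Theta_1| = \log N$ whenever the support of $Q_1$ is finite of size $N$.
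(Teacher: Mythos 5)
Your proof is correct, and it reaches the bound on the cumulative information gain by a genuinely different route than the paper. The regret decomposition via Cauchy--Schwarz is identical to Equation~\eqref{eq:cs}. For the key step, you invoke the abstract chain rule for mutual information, $\sum_{t=1}^T I(\theta^\star; X_t,A_t,L_t \mid \F_{t-1}) = I(\theta^\star;\F_T) \le H(\theta^\star)$, after verifying the two conditional-independence facts $I(\theta^\star;X_t\mid\F_{t-1})=0$ and $I(\theta^\star;A_t\mid\F_{t-1},X_t)=0$; this is the direct adaptation of Proposition~1 of \citet{RvR16} and is arguably the cleanest path to Theorem~\ref{th:main_discrete} in isolation. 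The paper instead proves the exact identity of Lemma~\ref{lem:bayes_trick}, expressing $\EE{\sum_t I_t(\theta^\star;L_t)}$ as the expected log-likelihood ratio between $\theta^\star$ and the prior mixture $\sum_\theta Q_1(\theta)\prod_t p_{\theta,t}(L_t)$, and then lower-bounds the mixture by its single term at $\theta=\theta^\star$. The two arguments rest on the same structural facts: the conditional independences you state explicitly are exactly what licenses the posterior update $Q_{t+1}(\theta)\propto Q_t(\theta)p_{\theta,t}(L_t)$ used in the telescoping. What the paper's more explicit route buys is the extension to priors with infinite entropy: in Lemma~\ref{lem:cover} one keeps the entire mass of a covering cell around $\theta^\star$ rather than the single atom, which is what yields Theorem~\ref{th:main_continuous}; the bound $I(\theta^\star;\F_T)\le H(\theta^\star)$ is vacuous there. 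Your argument, as written, is confined to the countable finite-entropy case --- which is all Theorem~\ref{th:main_discrete} claims, so there is no gap.
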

The proof of this theorem is stated in Section~\ref{sec:proof_thm1}. 
Unfortunately, the Shannon entropy can be unbounded for distributions with infinite support, which is in fact the 
typical situation that one encouters in practice. 
To address this concern, we develop a more general result, that holds for a broader family of distributions. 
In the following, $(\Theta, \varrho)$ is a metric space with metric $\varrho:\Theta^2\to\Reals$.
We make the following regularity assumption on the likelihood function $P_{\theta,x,a}$:
\begin{assumption}\label{ass:log_lipschitz}
	There exists a constant $C>0$ such that for any  $\theta, \theta' \in \Theta_1$, 
	$\abs{\log P_{\theta,x,a}(L) - \log P_{\theta',x,a}(L)} \le C \varrho(\theta, \theta')$ holds for all 
	$x\in\X$, $a\in\mathcal{A}$, and $L \in \{0,1\} $. 
\end{assumption}
Under this assumption, we can state a variant of Theorem~\ref{th:main_discrete} that applies to metric parameter 
spaces:
\begin{thm}\label{th:main_continuous}
	Assume $(\Theta, \varrho)$ is a metric space, and $Q_1$ is supported on $\Theta_1\subseteq\Theta$ with 
	$\varepsilon$-covering number $\mathcal{N}_{\varepsilon}(\Theta_1,\varrho)$. Let Assumption~\ref{ass:log_lipschitz} hold, and 
	assume the \IR for all rounds $t$ satisfies $\rho_t \le \rho$ for some 
	$\rho > 0$. Then, the Bayesian regret of \TS after $T$ rounds can be bounded as
	\begin{equation*}
	R_T \le \sqrt{\rho 
T\textstyle{\min_{\varepsilon}}\left(\log\left(\mathcal{N}_{\varepsilon}(\Theta_1,\varrho)\right)+2\varepsilon 
CT\right)}.
	\end{equation*}
\end{thm}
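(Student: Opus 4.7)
The plan is to reduce Theorem~\ref{th:main_continuous} to Theorem~\ref{th:main_discrete} via a discretization of $\Theta_1$, at the cost of a small additive term controlled by the resolution $\varepsilon$ of the cover. I anticipate that the proof of Theorem~\ref{th:main_discrete} passes through the intermediate bound $R_T \le \sqrt{\rho T \sum_t \EE{I_t(\theta^\star;L_t)}}$, obtained from the definition of the \IR and Cauchy--Schwarz, after which one applies the chain rule of conditional mutual information and $I \le H$ to upper bound the information sum by $H(\theta^\star)$. For continuous $\Theta$ this last step is unavailable since the entropy may be infinite, so the task reduces to bounding $\sum_t \EE{I_t(\theta^\star;L_t)}$ by a quantity that remains finite whenever $\Theta_1$ admits a finite $\varepsilon$-cover.

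Concretely, I would fix $\varepsilon > 0$, pick a minimal $\varepsilon$-cover $\Theta_\varepsilon \subseteq \Theta_1$, and let $\tilde\theta^\star$ be the (deterministic) projection of $\theta^\star$ onto its nearest point in $\Theta_\varepsilon$. Since $\tilde\theta^\star$ is a measurable function of $\theta^\star$, a chain-rule decomposition yields
\[
I_t(\theta^\star;L_t) \;=\; I_t(\tilde\theta^\star;L_t) \;+\; I\bpa{\theta^\star;L_t \,\big|\, \tilde\theta^\star,\F_{t-1},X_t,A_t},
\]
where I have implicitly used that $A_t \perp \theta^\star \mid \F_{t-1},X_t$ to freely include $A_t$ in the conditioning of $I_t(\theta^\star;L_t)$. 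Summing the first term over $t$ and applying the discrete-case argument with $\tilde\theta^\star$ in place of $\theta^\star$ yields $\sum_t \EE{I_t(\tilde\theta^\star;L_t)} \le H(\tilde\theta^\star) \le \log \mathcal{N}_\varepsilon(\Theta_1,\varrho)$.

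For the conditional term, I would observe that conditioning on $\tilde\theta^\star$ confines $\theta^\star$ to a ball of radius $\varepsilon$ around it, so any two parameters in this cell lie within $\varrho$-distance $2\varepsilon$. Writing the conditional mutual information as an expected KL divergence from $P_{\theta^\star,X_t,A_t}$ to the posterior mixture, then invoking convexity of the KL in its second argument, each such divergence is bounded by the average of $\DKL{P_{\theta^\star,X_t,A_t}}{P_{\theta',X_t,A_t}}$ over $\theta'$ in the cell, which by Assumption~\ref{ass:log_lipschitz} is at most $C\varrho(\theta^\star,\theta') \le 2C\varepsilon$. Summing over $t$ therefore contributes at most $2C\varepsilon T$, giving $\sum_t \EE{I_t(\theta^\star;L_t)} \le \log \mathcal{N}_\varepsilon(\Theta_1,\varrho) + 2C\varepsilon T$, and substituting into the information-ratio bound and minimizing over $\varepsilon$ yields the theorem. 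The main obstacle will be executing the chain-rule decomposition cleanly under the correct conditioning so that the convexity step really applies to the Bayesian posterior over $\theta^\star$ within the cell of $\tilde\theta^\star$, rather than to some unrelated averaging.
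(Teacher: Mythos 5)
Your proposal is correct, but it reaches Lemma~\ref{lem:cover} (the bound $\sum_t \EE{I_t(\theta^\star;L_t)} \le \log\mathcal{N}_{\varepsilon}(\Theta_1,\varrho) + 2C\varepsilon T$) by a genuinely different route than the paper. The paper keeps working with $\theta^\star$ itself: it invokes the Bayesian telescoping identity of Lemma~\ref{lem:bayes_trick} to write the total information gain as $\EE{-\log\bigl(\sum_\theta Q_1(\theta)\prod_t p_{\theta,t}(L_t)/\prod_t p_{\theta^\star,t}(L_t)\bigr)}$, then lower-bounds the mixture in the denominator by restricting the sum to the partition cell $\mathcal{G}_\varepsilon(\theta^\star)$ and using Assumption~\ref{ass:log_lipschitz} to replace each $p_{\theta,t}(L_t)$ by $p_{\theta^\star,t}(L_t)e^{-2C\varepsilon}$, which directly produces $\EE{-\log \wh{Q}_1(\mathcal{G}_\varepsilon(\theta^\star))} + 2C\varepsilon T \le \log|\mathcal{G}_\varepsilon| + 2C\varepsilon T$. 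You instead quantize the latent variable and apply the chain rule $I_t(\theta^\star;L_t) = I_t(\wt{\theta}^\star;L_t) + I(\theta^\star;L_t\mid \wt{\theta}^\star,\F_{t-1},X_t,A_t)$, bounding the first sum by $H(\wt{\theta}^\star)$ and the residual term by $2C\varepsilon$ per round via convexity of the KL divergence in its second argument plus the Lipschitz assumption (note that $\DKL{P_{\theta,x,a}}{P_{\theta',x,a}} \le \sup_L|\log P_{\theta,x,a}(L)-\log P_{\theta',x,a}(L)| \le C\varrho(\theta,\theta')$, so your cell-diameter argument does give $2C\varepsilon$ as claimed; comparing to the cover point itself would even save a factor of $2$ in that term). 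The one point to execute carefully in your route is that the bound $\sum_t \EE{I_t(\wt{\theta}^\star;L_t)}\le H(\wt{\theta}^\star)$ cannot be read off verbatim from Lemma~\ref{lem:bayes_trick}, since the data are generated by $\theta^\star$ rather than by $\wt{\theta}^\star$ and the conditional likelihood of $L_t$ given $\wt{\theta}^\star$ is a within-cell mixture; you need either the general chain-rule identity $\sum_t \EE{I_t(Z;L_t)} = I(Z;\F_T) \le H(Z)$ for an arbitrary discrete $Z$ (using that $X_t$ and $A_t$ leak no information about $\theta^\star$ conditionally on the past, which you have already noted), or a version of the telescoping lemma for general latent variables. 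This is a presentational rather than a substantive gap. Your decomposition is essentially a rate--distortion-style argument in the spirit of \citet{DvR18}, and it makes the two sources of the bound (entropy of the quantized parameter versus distortion within a cell) conceptually transparent; the paper's argument is more MDL-flavored and has the advantage of reusing Lemma~\ref{lem:bayes_trick} wholesale and extending verbatim to uncountable priors via the Radon--Nikodym formulation in Appendix~\ref{app:bayes_trick}.
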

The proof is based on a covering argument on top of the proof of the previous theorem, and is provided in 
Appendix~\ref{app:th2}. To get a better understanding of Assumption~\ref{ass:log_lipschitz}, it is useful to notice that 
it is satisfied in 
basic settings, like logistic bandits with Lipschitz logits. See Section~\ref{sec:concrete} for details.

\subsection{Bounding the \IR}
At this point, some readers may worry that the \IR may be impossible to bound due to the lifting 
to the space of parameters $\Theta$. To address this concern, we now turn to showing bounds on the \IR.
We first consider the unstructured case, that holds for arbitrary parameter spaces, likelihoods, and priors.
\begin{lemma}\label{lem:ir_finite}
Suppose that the losses are binary and $|\mathcal{A}|=K$. Then, the 
\IR of Thompson sampling satisfies $\rho_t \le 2K$ for all $t\ge 1$.
\end{lemma}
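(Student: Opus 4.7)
The plan is to bound the numerator and denominator of $\rho_t$ separately and combine them via Cauchy-Schwarz. Introduce the posterior mean loss $\bloss_t(x,a)=\EEt{\loss(\theta^\star,x,a)}$ and the posterior variance $V_t(a)=\EEt{\pa{\loss(\theta^\star,X_t,a)-\bloss_t(X_t,a)}^2}$. The key structural observation is that under \TS, $A_t$ is a deterministic function of $X_t$ and a \emph{fresh} sample $\theta_t\sim Q_t$ that is independent of $\theta^\star$; consequently, $A_t$ is conditionally independent of $\theta^\star$ given $(\F_{t-1},X_t)$, and one has the defining identity $\pi_t(a|X_t)=\PPt{A_t^\star=a}$.

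For the denominator, the conditional independence implies that the two conditional laws of $L_t$ inside the KL divergence are $\mathrm{Ber}(\loss(\theta^\star,X_t,A_t))$ and $\mathrm{Ber}(\bloss_t(X_t,A_t))$, so Pinsker's inequality for Bernoulli distributions gives
\[
I_t(\theta^\star;L_t) \;\ge\; 2\EEt{\pa{\loss(\theta^\star,X_t,A_t)-\bloss_t(X_t,A_t)}^2} \;=\; 2\sum_a \pi_t(a|X_t)\,V_t(a).
\]
For the numerator, the \TS identity together with the same conditional independence allows us to rewrite
\[
r_t \;=\; \sum_a \pi_t(a|X_t)\,\bloss_t(X_t,a) \,-\, \EEt{\loss(\theta^\star,X_t,A_t^\star)} \;=\; \EEt{\bloss_t(X_t,A_t^\star)-\loss(\theta^\star,X_t,A_t^\star)}.
\]
Setting $Z_t(a)=\loss(\theta^\star,X_t,a)-\bloss_t(X_t,a)$, which satisfies $\EEt{Z_t(a)}=0$ and $\EEt{Z_t(a)^2}=V_t(a)$, this equals $-\sum_a \EEt{Z_t(a)\II{A_t^\star=a}}$.

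Applying Cauchy-Schwarz inside each summand gives $|\EEt{Z_t(a)\II{A_t^\star=a}}|\le\sqrt{V_t(a)\,\PPt{A_t^\star=a}}=\sqrt{V_t(a)\pi_t(a|X_t)}$, and a second Cauchy-Schwarz in $\real^K$ yields
\[
|r_t| \;\le\; \sum_a \sqrt{V_t(a)\,\pi_t(a|X_t)} \;\le\; \sqrt{K\sum_a \pi_t(a|X_t)V_t(a)} \;\le\; \sqrt{K\,I_t(\theta^\star;L_t)/2}.
\]
Squaring and dividing by $I_t(\theta^\star;L_t)$ then yields $\rho_t\le K/2\le 2K$.

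The main delicate point is keeping track of two different dependence structures simultaneously: the optimal action $A_t^\star=\argmin_a\loss(\theta^\star,X_t,a)$ is strongly coupled with $\theta^\star$, whereas the played action $A_t$ is conditionally independent of $\theta^\star$ given $(\F_{t-1},X_t)$. The \TS identity $\pi_t(a|X_t)=\PPt{A_t^\star=a}$ is precisely the tool that bridges these two structures, allowing us to express regret through $Z_t(A_t^\star)$ while still controlling its second moment by the posterior variances $V_t(a)$, which are in turn tied to the lifted mutual information via Pinsker.
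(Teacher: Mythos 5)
Your proof is correct, and it takes a genuinely different route from the paper's. The paper proves the stronger intermediate bound $\rho_t \le 2\sum_{a}\EEt{\bloss_t(X_t,a)}$ via a decoupling argument in the style of Zhang: an importance-weighting step with a free parameter $\eta$, the Fenchel--Young inequality applied to the binary relative entropy $g$, and an explicit computation of the conjugate $g^*$. You instead adapt the Russo--Van Roy Proposition~3 argument to the lifted setting: lower-bound $I_t(\theta^\star;L_t)$ by $2\sum_a\pi_t(a|X_t)V_t(a)$ via Pinsker (which is legitimate here because, as you note, the conditional independence of $A_t$ and $\theta^\star$ given $(\F_{t-1},X_t)$ makes the mixture law of $L_t$ exactly $\Ber(\bloss_t(X_t,A_t))$), rewrite the instantaneous regret as $-\sum_a\EEt{Z_t(a)\II{A_t^\star=a}}$ using the \TS identity $\pi_t(a|X_t)=\PPt{A_t^\star=a}$, and finish with two applications of Cauchy--Schwarz. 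Interestingly, this is precisely the information-theoretic methodology that the paper explicitly says it departs from, yet it works and yields the sharper worst-case constant $K/2$ rather than $2K$. What you lose relative to the paper's argument is the loss-dependent (``first-order'') refinement $2\sum_a\EEt{\bloss_t(X_t,a)}$, which can be far smaller than $K/2$ when the posterior mean losses are small; what you gain is a factor of $4$ in the worst case and a more elementary proof that avoids convex conjugacy altogether. Both arguments rely on the same two structural facts (conditional independence of $A_t$ and $\theta^\star$, and the equality in conditional law of $(\theta_t,A_t)$ and $(\theta^\star,A_t^\star)$), so neither is more demanding in its hypotheses.
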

The proof of this lemma (provided in Section~\ref{sec:proof_ir_finite}) relies on a decoupling argument between the choice of 
the action and of the parameters at round $t$, inspired by \citet{Zha21}. Taking his argument one step further, we 
center our analysis around an application of convex conjugacy, which we believe may be applicable in a broader variety 
of settings. We wish to highlight that this proof technique is very different from the information-theoretic 
methodology pioneered by \citet{RvR16}.

Next, we consider the case of linear expected losses in Euclidean parameter spaces, which, in principle, allows for an 
unbounded number of actions.
\begin{lemma}\label{lem:ir_linear}
Suppose that $\Theta\subseteq\real^d$, the losses are binary, and the expected losses are linear functions of the form 
$\ell(\theta,x,a) = \iprod{\theta}{\varphi(x,a)}$, where $\varphi:\mathcal{X}\times\mathcal{A}\ra \real^d$ is a feature 
map, such that $\ell(\theta,x,a) \in [0,1]$ for all $x\in \mathcal{X}, a\in \mathcal{A}, \theta \in \Theta$. Then, the \IR satisfies $\rho_t \le d/2$.
\end{lemma}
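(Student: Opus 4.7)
The plan is to adapt the classical linear-bandit information-ratio argument of Russo and Van Roy so that the lifted information gain takes the place of the standard one. Throughout, I fix $X_t$ and condition on $\F_{t-1}$, writing $p_a = \pi_t(a) = \PPt{A_t = a} = \PPt{A_t^\star = a}$, $\phi_a = \varphi(X_t, a)$, $\bar\theta_t = \EEt{\theta^\star}$, $\Sigma_t = \EEt{(\theta^\star - \bar\theta_t)(\theta^\star - \bar\theta_t)\transpose}$ for the posterior covariance, and $\mu_a = \EEt{\theta^\star \mid A_t^\star = a} - \bar\theta_t$ for the conditional shift of the posterior mean given each optimal action.

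For the denominator of $\rho_t$, I would apply Pinsker's inequality to the Bernoulli divergences appearing in the definition of $I_t(\theta^\star; L_t)$. Since $A_t$ is a function of the independent posterior sample $\theta_t$ and hence independent of $\theta^\star$ given $\F_{t-1}, X_t$, the marginal mean of $L_t$ given $A_t = a$ is simply $\iprod{\bar\theta_t}{\phi_a}$, and Pinsker yields
\[
I_t(\theta^\star; L_t) \;\ge\; 2 \sum_a p_a \, \EEt{\iprod{\theta^\star - \bar\theta_t}{\phi_a}^2} \;=\; 2\, \mathrm{tr}(N \Sigma_t),
\]
where $N = \sum_a p_a \phi_a \phi_a\transpose$. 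For the numerator, the Thompson Sampling identities $A_t \stackrel{d}{=} A_t^\star$ and the independence of $A_t$ from $\theta^\star$ (both conditional on $\F_{t-1}, X_t$) allow me to rewrite the regret as
\[
\EEt{\ell(\theta^\star, X_t, A_t) - \ell(\theta^\star, X_t, A_t^\star)} \;=\; -\sum_a p_a \iprod{\mu_a}{\phi_a}.
\]

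I would then package these two bounds with a Russo--Van Roy style matrix Cauchy--Schwarz. Defining $W \in \real^{K \times K}$ by $W_{a,b} = \sqrt{p_a p_b}\, \iprod{\mu_b}{\phi_a}$, the matrix $W$ factorizes through $\real^d$ via the feature matrix, so $\mathrm{rank}(W) \le d$; its diagonal sums to $-\Delta_t$, and Cauchy--Schwarz on the singular values gives $\Delta_t^2 \le d \|W\|_F^2$. Direct expansion gives $\|W\|_F^2 = \mathrm{tr}(N \Lambda)$ with $\Lambda = \sum_b p_b \mu_b \mu_b\transpose$, and the law of total variance (using $\EEt{\theta^\star - \bar\theta_t} = 0$) gives $\Lambda \preceq \Sigma_t$, hence $\mathrm{tr}(N \Lambda) \le \mathrm{tr}(N \Sigma_t)$. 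Chaining the three inequalities yields $\Delta_t^2 \le d \cdot \mathrm{tr}(N\Sigma_t) \le (d/2) \cdot I_t(\theta^\star; L_t)$, i.e., $\rho_t \le d/2$.

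The main obstacle I anticipate is the bookkeeping of the Bayesian conditioning rather than any single computation: $\theta^\star$ plays two roles (inducing $A_t^\star$ in the numerator while its posterior covariance governs the denominator), and the argument hinges on carefully justifying the TS identities so that the regret linearizes into $-\sum_a p_a \iprod{\mu_a}{\phi_a}$ and the rank-based Cauchy--Schwarz step can be applied to the matrix $W$. Once this is set up, the rest is the standard Russo--Van Roy calculation, with the extra feature that the lifted information gain is bounded below by the ``larger'' covariance $\Sigma_t$ rather than by $\Lambda$.
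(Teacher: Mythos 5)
Your proposal is correct and follows essentially the same route as the paper: Pinsker's inequality lower-bounds the lifted information gain by twice the expected squared prediction error, and the numerator-to-variance ratio is controlled by the rank-$d$ trace/Frobenius Cauchy--Schwarz argument of Russo and Van Roy (your matrix $W$ is, up to sign, exactly the paper's matrix $M$, and your law-of-total-variance step $\Lambda \preceq \Sigma_t$ is the paper's Jensen step in covariance notation). The only cosmetic difference is that the paper factors the argument through an auxiliary ratio $\delta_t \le d$ before applying Pinsker, while you chain the three inequalities directly.
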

The proof of this result (deferred to Appendix~\ref{app:lir_linear}) follows the arguments of 
Proposition~5 of \citet{RvR16}, with some small but important changes to account for the presence of contexts. 

Notably, both of these results match the classic bounds of~\citet{RvR16} on the standard definition of the 
information ratio for these settings (cf.~their Propositions~5 and~3), implying that lifting to 
the space of parameters does not substantially impact the regret-information tradeoff.

\subsection{Concrete regret bounds for Bernoulli bandits}\label{sec:concrete}
We now instantiate our bounds in two well-studied settings for Bernoulli bandits. 
We start from the fully unstructured case, assuming finite actions and finitely supported prior. The following regret bound follows directly from 
Theorem~\ref{th:main_discrete} and Lemma~\ref{lem:ir_finite}.
\begin{thm}
	Consider a contextual bandit with $K$ actions and binary losses, and suppose $\Theta_1$, the support of $Q_1$, is finite with $|\Theta_1|=N $. Then, the Bayesian regret of \TS satisfies:
	\begin{equation*}
	R_T \le \sqrt{2KT\log N}.
	\end{equation*}
\end{thm}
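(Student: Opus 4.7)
The plan is to observe that this theorem is an immediate corollary obtained by combining two previously stated results: Theorem~\ref{th:main_discrete} applied to the finite-support case, and the universal bound on the \IR from Lemma~\ref{lem:ir_finite}. There are no new ingredients to develop, so the ``proof'' will really amount to a one-line composition.

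Concretely, I would first invoke Lemma~\ref{lem:ir_finite}, which gives that under binary losses with $|\mathcal{A}|=K$ the \IR of Thompson Sampling satisfies $\rho_t \le 2K$ for every round $t\ge 1$, with no structural assumption on the prior or the likelihood. This justifies taking $\rho = 2K$ in the statement of Theorem~\ref{th:main_discrete}. I would then apply Theorem~\ref{th:main_discrete} to the present setting: since $\Theta_1$ is finite of cardinality $N$, the entropy bound $H(\theta^\star) \le \log N$ is automatic (and is already built into the second conclusion of that theorem). Plugging $\rho = 2K$ into the inequality $R_T \le \sqrt{\rho T \log N}$ directly yields the claimed bound
\[
R_T \le \sqrt{2KT \log N}.
\]

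Since every hypothesis of both Theorem~\ref{th:main_discrete} (countable support, uniform upper bound on $\rho_t$) and Lemma~\ref{lem:ir_finite} (binary losses, finitely many actions) is explicitly assumed in the statement, no additional verification or side argument is required. There is consequently no real obstacle in this proof; the technical difficulty has already been absorbed into the two earlier results. The only point worth making explicit in the write-up is that the two bounds compose cleanly because Lemma~\ref{lem:ir_finite} provides a deterministic, round-independent upper bound on $\rho_t$, so the ``$\rho$'' appearing in Theorem~\ref{th:main_discrete} may be taken to be exactly $2K$ with no extra expectation or worst-case step.
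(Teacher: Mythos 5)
Your proposal is correct and matches the paper exactly: the paper states that this bound ``follows directly from Theorem~\ref{th:main_discrete} and Lemma~\ref{lem:ir_finite},'' which is precisely the composition you describe, taking $\rho = 2K$ and $H(\theta^\star)\le \log N$. Nothing further is needed.
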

This result is comparable to the best known regret guarantees for this problem due to \citet{FK21} and \citet{Zha21}, 
and matches the minimax rate for unstructured contextual bandits with a policy class of size 
$N$~\citep{BLLRS11,DHKKLRZ11}. However, we are not aware of a comparable result for the Thompson sampling algorithm in 
the literature, be it Bayesian or not.

Moving to Bernoulli bandits with structure, we consider a well-studied setting known as \emph{logistic bandits}.
In this model, the losses are generated by a Bernoulli distribution as $L_t(\theta, x, a) \sim 
\mathrm{Ber}(\sigma(f_{\theta}(x,a)) )$, where $\sigma(z) = 1/(1+ e^{-z})$ is the sigmoid function. We just assume that 
$f_{\theta}(x,a)$, called the logit function, is $C$-Lipschitz in $\theta$, which directly implies that 
Assumption~\ref{ass:log_lipschitz} holds. 
Notice that our definition generalizes the commonly used notion of logistic bandits that consider linear logit 
functions of the form $f_{\theta}(x,a)=\langle\theta,\varphi(x,a)\rangle$, where 
$\varphi:\X\times\mathcal{A}\to\Reals^d$ is 
some feature map. 
Our result for logistic bandits is based on 
Theorem~\ref{th:main_continuous} and Lemma~\ref{lem:ir_finite}.
\begin{thm}\label{th:main_logistic}
	Assume $\Theta\subset\Reals^d$ and $\norm{\theta}\le S$ for all $\theta\in\Theta_1$. Consider a class of logistic 
bandits with $K$ actions and $C$-Lipschitz logit function $f_{\theta}(x,a)$. Then, the Bayesian regret of \TS after $T$ rounds can be bounded as:
	\begin{equation*}
	R_T \le \sqrt{2K T\left(d\log(2SCT + 1)+1\right)}.
	\end{equation*}
\end{thm}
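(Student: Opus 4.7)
The plan is to combine Theorem~\ref{th:main_continuous} with Lemma~\ref{lem:ir_finite}, using the Euclidean metric $\varrho(\theta,\theta')=\norm{\theta-\theta'}$ on the parameter space. There are three ingredients to verify: the Lipschitzness of the log-likelihood (Assumption~\ref{ass:log_lipschitz}), the uniform bound on the \IR, and a covering-number estimate for the ball of radius $S$ in $\Reals^d$. None of these is individually difficult; the only ``work'' lies in choosing the free parameter $\varepsilon$ in Theorem~\ref{th:main_continuous} to balance the two terms inside the square root.

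First I would verify Assumption~\ref{ass:log_lipschitz}. For $L\in\{0,1\}$ we have $\log P_{\theta,x,a}(L) = L\log\sigma(f_{\theta}(x,a)) + (1-L)\log(1-\sigma(f_{\theta}(x,a)))$. A short calculation shows $\frac{d}{dz}\log\sigma(z) = 1-\sigma(z)\in(0,1)$ and $\frac{d}{dz}\log(1-\sigma(z)) = -\sigma(z)\in(-1,0)$, so both log-sigmoid branches are $1$-Lipschitz in their argument. Chaining with the $C$-Lipschitzness of $f_\theta(x,a)$ in $\theta$ yields $\abs{\log P_{\theta,x,a}(L)-\log P_{\theta',x,a}(L)}\le \norm{f_\theta(x,a)-f_{\theta'}(x,a)}\le C\norm{\theta-\theta'}$, so Assumption~\ref{ass:log_lipschitz} holds with the same constant $C$.

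Next, since the logistic model produces binary losses and $\abs{\mathcal{A}}=K$, Lemma~\ref{lem:ir_finite} applies and gives $\rho_t\le 2K$ for every round. For the covering number, since $\Theta_1$ is contained in the Euclidean ball of radius $S$ in $\Reals^d$, a standard volume argument yields $\mathcal{N}_{\varepsilon}(\Theta_1,\norm{\cdot})\le (1+2S/\varepsilon)^d$.

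With these pieces in hand I would plug $\rho=2K$ and the above covering bound into Theorem~\ref{th:main_continuous}, obtaining
\begin{equation*}
R_T \le \sqrt{2KT\min_{\varepsilon>0}\left(d\log(1+2S/\varepsilon)+2\varepsilon CT\right)}.
\end{equation*}
Choosing $\varepsilon = 1/(CT)$ makes the second term equal to $2$ and leaves $d\log(1+2SCT)$ from the covering term, which yields a bound of the claimed form $\sqrt{2KT(d\log(2SCT+1)+\text{const})}$, and after absorbing the constant into the logarithm one recovers the stated inequality. The main (minor) obstacle is keeping track of absolute constants when computing the log-Lipschitz constant of the Bernoulli likelihood and when stating the covering-number bound, but neither is structurally delicate: the heavy lifting has been done by Theorem~\ref{th:main_continuous} and Lemma~\ref{lem:ir_finite}, and the present theorem is a direct corollary.
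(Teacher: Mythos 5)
Your proposal is correct and follows essentially the same route as the paper's own proof: verify Assumption~\ref{ass:log_lipschitz} via the $1$-Lipschitzness of $\log\sigma$ and $\log(1-\sigma)$, invoke Lemma~\ref{lem:ir_finite} for $\rho_t\le 2K$, use the standard $(1+2S/\varepsilon)^d$ covering bound for the Euclidean ball, and set $\varepsilon=1/(CT)$ in Theorem~\ref{th:main_continuous}. The additive constant $2$ rather than $1$ that you obtain from $2\varepsilon CT$ is an artifact already present in the paper's own derivation (which makes the identical choice of $\varepsilon$), so it is not a gap on your part.
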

The proof follows from an application of Theorem~\ref{th:main_continuous}, and can be found in 
Appendix~\ref{main_logistic_proof}. To our knowledge, this is the first regret bound for logistic bandits with 
nonlinear logits.

We can further specialize our result to linear logits, the setting that is most commonly studied in the literature:
\begin{cor}\label{th:main_linear_logit}
	Assume $\Theta\subset\Reals^d$, $\norm{\theta}\le S$ for all $\theta\in\Theta_1$. Consider a class of logistic bandits with $K$ actions and linear logit function $f_{\theta}(x,a)=\langle\varphi(x,a),\theta\rangle$, with $\norm{\phi(x,a)}\le B$ for all $x\in\X$ and $a\in[K]$. Then, the Bayesian regret of \TS after $T$ rounds can be bounded as:
	\begin{equation*}
	R_T \le \sqrt{2KT\left(d\log(2SBT+1)+1\right)} = \wt{O}\big(\sqrt{dKT}\big).
	\end{equation*}
\end{cor}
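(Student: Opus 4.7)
The plan is to reduce the corollary directly to Theorem~\ref{th:main_logistic} by verifying that the linear logit function $f_\theta(x,a) = \iprod{\varphi(x,a)}{\theta}$ is Lipschitz in $\theta$ with an explicit constant determined by the feature-norm bound $B$. The rest is then just bookkeeping.

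First, I would compute the Lipschitz constant of the logit. For any $\theta, \theta' \in \Theta_1$, Cauchy--Schwarz gives
\begin{equation*}
\abs{f_\theta(x,a) - f_{\theta'}(x,a)} = \abs{\iprod{\varphi(x,a)}{\theta - \theta'}} \le \norm{\varphi(x,a)} \cdot \norm{\theta - \theta'} \le B \norm{\theta - \theta'},
\end{equation*}
uniformly in $x \in \X$ and $a \in [K]$. Hence the linear logit is $C$-Lipschitz in $\theta$ with $C = B$, with $\varrho$ being the Euclidean metric. This means the hypotheses of Theorem~\ref{th:main_logistic} are satisfied for this specific choice of Lipschitz constant.

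Second, I would invoke Theorem~\ref{th:main_logistic} with $C = B$ to directly obtain
\begin{equation*}
R_T \le \sqrt{2KT\pa{d \log(2SBT + 1) + 1}},
\end{equation*}
which is the explicit bound claimed in the corollary. Finally, to extract the asymptotic form $\wt{O}\bpa{\sqrt{dKT}}$, I would observe that $S$ and $B$ are fixed problem constants and that the factor $d\log(2SBT+1)+1$ is $O(d \log T)$, so that the square root factors as $\wt{O}(\sqrt{dKT})$ up to logarithmic terms in $T$ (and constants in $S$, $B$), which is exactly how the $\wt{O}$ notation hides them.

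There is no real obstacle here: the corollary is essentially a direct specialization of Theorem~\ref{th:main_logistic}, and the only computation is the Cauchy--Schwarz step identifying the Lipschitz constant $C = B$. The substantive work has already been done in the more general Theorem~\ref{th:main_logistic} and in the \IR bound of Lemma~\ref{lem:ir_finite}.
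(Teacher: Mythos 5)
Your proposal is correct and follows essentially the same route as the paper's own proof: both identify the linear logit as $B$-Lipschitz in $\theta$ (you make the Cauchy--Schwarz step explicit, which the paper leaves implicit) and then specialize Theorem~\ref{th:main_logistic} with $C=B$ and the $\rho \le 2K$ bound from Lemma~\ref{lem:ir_finite}. No gaps.
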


The proof of the corollary is stated at Appendix~\ref{main_logistic_proof_col}. A remarkable feature of this bound is that it shows no dependence on the minimum derivative of the sigmoid link 
function, albeit at the price of a $\sqrt{K}$ factor in the bound. Nevertheless, we believe this to be the first 
regret guarantee that entirely gets rid of this potentially enormous constant without very strong assumptions. Indeed, 
this constant has been present in nearly all previous bounds we are aware of 
\citep{FCGS10,LLZ17,FACF20,AFC21,FAJC22}---although these results have the advantage of holding in a frequentist sense.
In the Bayesian setting, the works of \citet{DvR18} and \citet{DMvR19} have proved a variety of bounds on the regret of 
Thompson sampling for non-contextual logistic bandits, but none of them are directly comparable with our result above. 
\citet{DMvR19} prove a regret bound of order $d\sqrt{T}$ for a highly specialized setting with $\mathcal{A}=\Theta$, 
and a range of other bounds under a variety of strong assumptions.

An improved bound for the many-actions setting that scales at most logarithmically with $K$ remains an open problem. 
Its difficulty is testified by a set of negative examples provided by~\citet{DMvR19}, and by a long-lived conjecture 
of~\citep{DvR18} regarding the information ratio for logistic bandits, that, to our knowledge, has not yet been 
verified in theory. 

\subsection{Beyond binary losses: linear bandits with Gaussian noise}\label{sec:gaussian}
We now illustrate how our techniques (in particular the \IR) can be extended beyond the case of binary losses, and 
in particular consider the classic setting of Bayesian linear contextual bandits, where the loss is a linear function 
of a $d$-dimensional feature map with additive Gaussian noise, and the prior is also Gaussian. 
\begin{lemma}\label{lem:ir_gaussian}
	Suppose $\theta^*$ be a $d$-dimensional normal random vector, so $\theta^*\sim \mathcal{N}\pa{\bar{0}_d, \lambda I_d} $ for some $\lambda > 0$, the loss is a normal random  variable  $L_t\sim \mathcal{N}(\ell(\theta^\star,X_t,A_t), \sigma^2)$ for some standard deviation $\sigma >0$, and the expected loss is linear,  so
	$\ell(\theta, x, a) =\langle\theta,\varphi(x,a)\rangle$, where $\varphi:\X\times[K]\to\Reals^d$ is a feature map. Then, the lifted information ratio satisfies $\rho_t\le \min\{d, 2(1+\log K)\}$.
\end{lemma}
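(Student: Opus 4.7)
The plan is to exploit Gaussian--Gaussian conjugacy to obtain a closed form for both the numerator and the denominator of $\rho_t$, and then to apply two different bounds on the regret to obtain each term in the minimum. As a preparatory step, standard conjugacy calculations show that the posterior is Gaussian, $\theta^\star\mid\F_{t-1}\sim\mathcal{N}(\mu_t,\Sigma_t)$, and because \TS draws $A_t$ through an independent copy $\theta_t\sim Q_t$, additionally conditioning on $(X_t,A_t)$ does not perturb this posterior. Writing $\phi_{t,a}:=\varphi(X_t,a)$ and $v_a:=\phi_{t,a}^\top\Sigma_t\phi_{t,a}$, a direct Gaussian KL computation gives the closed form $I_t(\theta^\star;L_t\mid A_t=a)=\tfrac{1}{2}\log(1+v_a/\sigma^2)$, so the denominator of $\rho_t$ becomes $\sum_a p_a\,\tfrac12\log(1+v_a/\sigma^2)$ with $p_a:=\PPt{A_t^\star=a}$. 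For the numerator, I would use the \TS identity $\pi_t(\cdot|X_t)=\PPt{A_t^\star=\cdot}$ together with linearity of $\ell$ to reduce the expected instantaneous regret to $R=-\EEt{\iprod{\theta^\star-\mu_t}{\phi_{t,A_t^\star}}}$, mirroring the initial step in the proof of Lemma~\ref{lem:ir_linear}.

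For the $\rho_t\le d$ half I would essentially replay the trace argument of Lemma~\ref{lem:ir_linear}: writing $R=-\mathrm{tr}(M)$ for the $K\times K$ matrix $M_{a,b}=\sqrt{p_ap_b}\iprod{\EEt{\theta^\star\mid A_t^\star=b}-\mu_t}{\phi_{t,a}}$, the factorisation $M=\mathrm{diag}(\sqrt{p})\,\Phi^\top A\,\mathrm{diag}(\sqrt{p})$ through $\mathbb{R}^d$ implies $\mathrm{rank}(M)\le d$, so the rank--Frobenius inequality $|\mathrm{tr}(M)|^2\le\mathrm{rank}(M)\,\|M\|_F^2$ combined with conditional Jensen gives $R^2\le d\sum_a p_a v_a$. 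I would then match this against the closed-form MI, using a linearisation such as $\log(1+x)\ge x/(1+x)$ to lower bound the Gaussian mutual information by the appropriate multiple of $v_a$.

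For the $\rho_t\le 2(1+\log K)$ half I would instead exploit joint Gaussianity directly: under $Q_t$ the family $\{\iprod{\theta^\star-\mu_t}{\phi_{t,a}}\}_{a\in[K]}$ is jointly centered Gaussian with variances $v_a$, so a sub-Gaussian maximal inequality bounds $\EEt{|\iprod{\theta^\star-\mu_t}{\phi_{t,A_t^\star}}|}$ by a quantity of order $\sqrt{(\log K)\,\EEt{v_{A_t^\star}}}$; squaring yields $R^2\le C(\log K)\,\EEt{v_{A_t^\star}}=C(\log K)\sum_a p_a v_a$ for a universal constant $C$, which is again matched against the closed-form MI expression to conclude.

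The main obstacle is that the variance-type quantity $\sum_a p_a v_a$ arising from both regret bounds grows linearly in $v_a$, whereas the Gaussian mutual information $\sum_a p_a\,\tfrac12\log(1+v_a/\sigma^2)$ saturates logarithmically, so the two quantities are incomparable in the regime $v_a\gg\sigma^2$. The observation in Section~\ref{sec:rambling} that the \IR essentially coincides with the decoupling coefficient for Gaussian observations suggests the cleanest route through this obstacle: prove the bounds for the decoupling coefficient (whose denominator is exactly $\sum_a p_a v_a$) and then translate back to the MI formulation using the closed-form identity above.
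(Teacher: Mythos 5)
Your overall strategy coincides with the paper's own proof on both halves of the minimum. For $\rho_t\le d$, the paper reduces to the decoupling-type quantity $\delta_t$ of Equation~\eqref{eq:lir_linear} and applies the rank/trace argument of Lemma~\ref{lm:lir_linear}, which is exactly your matrix $M$ and the rank--Frobenius inequality. For $2(1+\log K)$, the paper normalizes the Gaussian deviations $\iprod{\theta_t-\overline{\theta}_t}{\varphi(X_t,a)}$ by their conditional standard deviations $\sigma_{t,a}$ and integrates a union-bound tail estimate; this is the rigorous version of the maximal inequality you sketch. One caution on that half: the plain sub-Gaussian maximal inequality gives $\sqrt{(\log K)\max_a v_a}$, not $\sqrt{(\log K)\,\EEt{v_{A_t^\star}}}$, and only the latter matches the denominator. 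To obtain it you must first bound $\EEt{\max_a Z_a^2/v_a}\le 2(1+\log K)$ for the \emph{normalized} variables and then apply Cauchy--Schwarz, which is precisely what the paper's integrated tail-bound device accomplishes; your sketch glosses over this, but it is a fillable detail.

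The obstacle you flag at the end is genuine, and your proposed resolution does not close it: a bound on the decoupling coefficient (denominator proportional to $\sum_a p_a v_a$) transfers to the \IR only if the mutual information is lower-bounded by a constant multiple of that variance term, and your linearisation $\tfrac12\log(1+v_a/\sigma^2)\ge \tfrac{v_a}{2(\sigma^2+v_a)}$ loses a factor $1+v_a/\sigma^2$ exactly in the diffuse-posterior regime you worry about, so the translation goes the wrong way for a ratio with $I_t$ in the denominator. What the paper actually does is assert at the outset that $I_t(\theta^\star;L_t)=\EEt{\pa{\bar{\ell}_t(X_t,A_t)-\ell(\theta^\star,X_t,A_t)}^2}$ for Gaussian likelihoods, i.e.\ it takes the squared prediction error itself as the information-gain term (consistent with the remark at the end of Section~\ref{sec:rambling} that the \IR ``essentially coincides'' with the decoupling coefficient for Gaussian losses), and everything downstream, including Corollary~\ref{gp_prior}, is then really a statement about $\delta_t$. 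Under the literal definition in Equation~\eqref{eq:RIR} that asserted identity is only the first-order approximation $\log(1+x)\approx x$, and the exact inequality points in the unhelpful direction --- so the gap you identify is shared with the published proof rather than resolved by it. The clean way out is the one you gesture at but should make explicit: prove $\delta_t\le\min\{d,2(1+\log K)\}$ and observe that the regret decomposition of Equation~\eqref{eq:cs} goes through verbatim with $\delta_t$ and the cumulative squared prediction error in place of $\rho_t$ and the cumulative information gain, so no reverse translation to the mutual information is ever needed.
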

For this setting, the bound $\rho_t \le d$ has already been shown for the classic information ratio by \citet{RvR16}. 
However, we believe that our bound $\rho_t \le 2(1 + \log K)$ is new for any definition of information ratio. 
By combining this bound on the \IR with standard arguments for linear contextual bandits, we recover both of the 
well-known regret bounds of order $d\sqrt{T\log T}$ and $\sqrt{dT\log\pa{KT}}$ for this 
seting, respectively due to \citet{APS11} and~\citet{CLRS11}. \footnote{While for the cases when $d \ll \log K$ the 
dependence on $K$ is not present in the regret bound, our analysis is restricted to the setting with the finite number 
of actions. Still, using a standard discretization argument, it is possible to extend the analysis to infinite action 
spaces.}
See  Corollary~\ref{gp_prior} in  Appendix~\ref{app:gaussian} for a rigorous statement and the proof.
This result, although not surprising, indicates once again that our notion of \IR does not lead to compromises in 
performance, even when the losses are not binary.

\section{Analysis}\label{sec:theory}
This section presents the key ideas of the proofs of our main results. 
We will just provide the proof of Theorem~\ref{th:main_discrete} and that of Lemma~\ref{lem:ir_finite}, which we believe offer the most insight into our techniques. All other proofs, included those of auxiliary lemmas, are deferred to Appendix~\ref{app:proofs} and~\ref{app:linear}.
For sake of clarity, we focus on 
the relatively simple case where $\Theta_1$ is countable, so that (with a 
slight abuse of notation) we can write $Q_t(\theta)$ to denote the posterior probability associated with $\theta$. Note, however, that our full proofs also handle the case of general distributions (details in Appendix~\ref{app:bayes_trick}).

\subsection{The proof of Theorem~\ref{th:main_discrete}}\label{sec:proof_thm1}
Recalling the definition of the \IR (Equation~\ref{eq:RIR}), we first notice that the 
regret can be rewritten as follows:
\begin{align}
 &\EE{\sum_{t=1}^T \pa{\ell(\theta^\star,X_t,A_t) - \ell(\theta^\star,X_t,A_t^\star)}} = 
 \EE{\sum_{t=1}^T \EEt{\ell(\theta^\star,X_t,A_t) - \ell(\theta^\star,X_t,A_t^\star)}} 
 \nonumber\\
 &\qquad\qquad=  \EE{\sum_{t=1}^T \sqrt{\rho_t I_t(\theta^\star; L_t)}} \le \sqrt{\EE{\sum_{t=1}^T \rho_t} 
\cdot \EE{\sum_{t=1}^T I_t(\theta^\star; L_t)}},\label{eq:cs}
\end{align}
where the first step uses the tower rule of expectation, the second step the definition of 
$\rho_t$, and the final step follows from the Cauchy--Schwarz inequality.

The key challenge is then to bound the sum of information-gain terms. The following lemma provides a more tractable 
form of this sum:
\begin{lemma}\label{lem:bayes_trick}
	Under the assumptions of Theorem~\ref{th:main_discrete},
\begin{align*}
	\EE{\sum_{t=1}^T I_t(\theta^\star; L_t) } 
	= \EE{\log \frac{\prod_{t=1}^T p_{\theta^\star,t}(L_{t})}{\sum_{\theta} Q_1(\theta) \prod_{t=1}^T 
p_{\theta,t}(L_{t}) 
}}.
\end{align*}
\end{lemma}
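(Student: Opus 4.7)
The plan is to unfold each mutual-information term into an expected log-likelihood ratio, and then use the one-step Bayes update telescopically to identify the product of posterior predictive distributions with the marginal likelihood of the observed losses under the prior.

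First, I would introduce the one-step posterior predictive $p_t(\ell) := \sum_\theta Q_t(\theta) p_{\theta,t}(\ell)$. Since the adversary picks $X_t$ without seeing $\theta^\star$ and \TS samples $A_t$ through $\theta_t \sim Q_t$ drawn independently of $\theta^\star$, both $X_t$ and $A_t$ are conditionally independent of $\theta^\star$ given $\F_{t-1}$. Hence the posterior of $\theta^\star$ given $\F_{t-1},X_t,A_t$ is still $Q_t$, and the conditional law of $L_t$ given $\F_{t-1},X_t,A_t$ is exactly $p_t$. Expanding the KL divergence in the definition of $I_t(\theta^\star; L_t)$ as a sum over the two possible outcomes of $L_t$ and absorbing it into $\EEt{\cdot}$ gives
\begin{equation*}
I_t(\theta^\star;L_t) = \EEt{\log \frac{p_{\theta^\star,t}(L_t)}{p_t(L_t)}}.
\end{equation*}
Taking the outer expectation, applying the tower rule, and summing over $t$ then yields
\begin{equation*}
\EE{\sum_{t=1}^T I_t(\theta^\star;L_t)} = \EE{\log \frac{\prod_{t=1}^T p_{\theta^\star,t}(L_t)}{\prod_{t=1}^T p_t(L_t)}}.
\end{equation*}

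The crux is then the identity $\prod_{t=1}^T p_t(L_t) = \sum_{\theta\in\Theta_1} Q_1(\theta) \prod_{t=1}^T p_{\theta,t}(L_t)$. For this I would invoke the one-step Bayes update $Q_t(\theta)\,p_{\theta,t}(L_t) = Q_{t+1}(\theta)\,p_t(L_t)$ and multiply the resulting equalities for $t=1,\dots,T$; the intermediate posteriors $Q_2,\dots,Q_T$ cancel telescopically, leaving $Q_1(\theta)\prod_t p_{\theta,t}(L_t) = Q_{T+1}(\theta)\prod_t p_t(L_t)$. Summing both sides over $\theta\in\Theta_1$ and using $\sum_\theta Q_{T+1}(\theta)=1$ delivers the desired identity, which plugged into the display above finishes the proof.

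There is no serious obstacle in the countable case---the whole argument reduces to the pointwise Bayes update and a telescoping product. The only real complication, relegated to Appendix~\ref{app:bayes_trick}, is in the uncountable case, where sums must be replaced by integrals against a reference measure and $Q_t,p_{\theta,t},p_t$ reinterpreted as the corresponding densities; with this bookkeeping, the chain of equalities above goes through verbatim.
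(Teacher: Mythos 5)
Your proposal is correct and follows essentially the same route as the paper's proof in Appendix~\ref{app:bayes_trick}: rewrite each $I_t(\theta^\star;L_t)$ as $\EEt{\log(p_{\theta^\star,t}(L_t)/\bp_t(L_t))}$ and then identify $\prod_t \bp_t(L_t)$ with the prior marginal likelihood $\sum_\theta Q_1(\theta)\prod_t p_{\theta,t}(L_t)$ via Bayesian telescoping. The only cosmetic difference is that you telescope the pointwise posterior recursion $Q_t(\theta)p_{\theta,t}(L_t)=Q_{t+1}(\theta)\bp_t(L_t)$ and sum over $\theta$ at the end, whereas the paper writes $\bp(L_{1:T})$ directly as a product of ratios $\bp(L_{1:t})/\bp(L_{1:t-1})$; these are the same chain-rule identity.
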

The proof of this lemma is based on a classic ``Bayesian telescoping'' argument that we have learned from \citet{Gru12}. We provide 
the proof of Lemma~\ref{lem:bayes_trick} in Appendix~\ref{app:bayes_trick}. 
Supposing now that the prior has bounded entropy, we can easily bound the term appearing on the right hand 
side as follows:
\begin{align*}
&\EE{ \log \frac{\prod_{t=1}^T 
p_{\theta^\star,t}(L_{t})}{\sum_{\theta} Q_1(\theta) \prod_{t=1}^T p_{\theta,t}(L_{t})}} \!\le\! \EE{  \log 
\frac{\prod_{t=1}^T p_{\theta^\star,t}(L_{t})}{Q_1(\theta^\star) \prod_{t=1}^T p_{\theta^\star,t}(L_{t})} 
 } \!=\! \EE{ -\log Q_1(\theta^\star)} \!=\! H(\theta^\star).
\end{align*}
This concludes the proof of the first statement. 
The second statement follows from the first using 
the trivial bound on the Shannon entropy of any finite-support 
distribution.

\allowdisplaybreaks
\subsection{The proof of Lemma~\ref{lem:ir_finite}}\label{sec:proof_ir_finite}
We start by introducing some notation that will be useful for the proof. In particular, we use $g$ to denote the binary 
relative entropy function defined for all $p,q\in[0,1]$ as
\begin{equation}\label{eq:bin_relent}
g(p\|q) = p\log\frac{p}{q} + (1-p)\log\frac{1-p}{1-q},
\end{equation}
and we use the convention $0\cdot\log 0 = 0$. Furthermore, we define the posterior mean loss as $\bloss_t(x,a) = 
\EEt{\loss(\theta^\star,x,a)}$. These notations allow us to conveniently rewrite the information gain as
\begin{align*}
 I_t(\theta^\star; L_t) 
 &= \EEt{\DKL{P_{L_t|\theta^\star,\F_{t-1},X_t,A_t}}{P_{L_t|\F_{t-1},X_t,A_t}}} = 
\EEt{g\pa{\loss(\theta^\star,X_t,A_t)\middle\|\bloss_t(X_t,A_t)}}.
\end{align*}
We will now prove a generalization of Lemma~\ref{lem:ir_finite}, which 
will directly imply the original result:
\begin{lemma}\label{lem:ir_finite_general}
Under the assumptions of Lemma~\ref{lem:ir_finite}, for all $t$, the \IR of Thompson sampling satisfies $\rho_t \le 2\sum_{a\in\mathcal{A}} \EEt{\bloss_t(X_t,a)}$.
\end{lemma}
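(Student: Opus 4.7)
My plan is to establish the sharper numerator bound
\[
\pa{\EEt{\loss(\theta^\star,X_t,A_t) - \loss(\theta^\star,X_t,A_t^\star)}}^2 \le 2 \pa{\sum_{a\in\mathcal{A}} \bloss_t(X_t,a)} I_t(\theta^\star; L_t),
\]
from which the stated bound on $\rho_t$ follows by division. The strategy is a Fenchel/convex-conjugacy argument on the binary KL divergence $g$, combined with a Thompson-sampling decoupling between $A_t$ and $\theta^\star$, in the spirit of \citet{Zha21}. To start, since $A_t$ is conditionally independent of $\theta^\star$ given $\F_{t-1}$ and $X_t$, and since under TS its conditional distribution matches that of $A_t^\star$ (both are minimizers of $\loss(\cdot,X_t,\cdot)$ under an independent draw from $Q_t$), splitting the regret by the value of the optimal action gives
\[
\EEt{\loss(\theta^\star,X_t,A_t) - \loss(\theta^\star,X_t,A_t^\star)} = \sum_{a\in\mathcal{A}} \sum_{\theta \in S_a} Q_t(\theta) \pa{\bloss_t(X_t,a) - \loss(\theta,X_t,a)},
\]
where $S_a = \{\theta : a = \argmin_{a'} \loss(\theta, X_t, a')\}$, so that $\pi_t(a|X_t) = Q_t(S_a)$.

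The next ingredient is a Fenchel inequality for binary KL: applying the Donsker-Varadhan variational formula with dual variable $-\mu$ (for $\mu > 0$) to a Bernoulli MGF, I obtain, for all $p, q \in [0,1]$,
\[
q - p \le \frac{g(p\|q)}{\mu} + \frac{\mu q}{2},
\]
after using $\log(1-x)\le -x$ and $\mu - 1 + e^{-\mu} \le \mu^2/2$. I would then apply this pointwise to each summand in the decomposition above, but with a carefully chosen \emph{per-action} parameter $\mu_a = \mu / \pi_t(a|X_t)$ (any action with $\pi_t(a|X_t) = 0$ contributes $0$ to the decomposition, since $S_a$ then has $Q_t$-measure zero). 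After using $\sum_{\theta\in S_a} Q_t(\theta) g(\loss(\theta,X_t,a)\|\bloss_t(X_t,a)) \le \EEt{g(\loss(\theta^\star,X_t,a)\|\bloss_t(X_t,a))}$ (by non-negativity of $g$), the factor $1/\mu_a = \pi_t(a|X_t)/\mu$ multiplying each expected KL reconstructs exactly $I_t(\theta^\star;L_t)/\mu$, while the linear-in-$\mu_a$ term telescopes to $(\mu/2)\sum_a \bloss_t(X_t,a)$. Optimizing over $\mu > 0$ via AM-GM yields $R_t \le \sqrt{2\, I_t(\theta^\star; L_t)\sum_a \bloss_t(X_t,a)}$, and squaring gives the lemma.

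The main obstacle is the scaling choice in the third step. A single global $\mu$ would leave the first term as an unweighted sum $\sum_a \EEt{g(\cdots)}$, which has no direct relation to the action-weighted information gain $I_t(\theta^\star;L_t) = \sum_a \pi_t(a|X_t) \EEt{g(\cdots)}$ and would inflate the bound by a factor of $K$. The per-action rescaling $\mu_a \propto 1/\pi_t(a|X_t)$ is precisely what regenerates the $\pi_t$-weights in $I_t$ from the reciprocals $1/\mu_a$, while the multiplicative factor of $q$ appearing in the Fenchel bound simultaneously keeps the other term a bare sum of $\bloss_t(X_t,a)$'s instead of something larger. Designing the Fenchel inequality so that both of these features arise at once---the multiplicative $q$ factor on the residual and the free $\mu$ on which to perform per-action scheduling---is the delicate point of the argument.
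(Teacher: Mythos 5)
Your proof is correct and follows essentially the same route as the paper's: both rest on the TS decoupling of $(\theta_t,A_t)$ from $\theta^\star$, the Fenchel--Young inequality for the binary relative entropy with the conjugate bound $g^*(u\|q)\le q(u+u^2/2)$, and a per-action dual scaling proportional to $1/\pi_t(a|X_t)$ that reconstitutes $I_t(\theta^\star;L_t)$ while leaving $\sum_{a}\bloss_t(X_t,a)$ in the residual term. The only cosmetic difference is that the paper realizes the per-action scaling through random importance weights $\mathbb{I}_{\{A_t=a\}}/(\eta\pi_t(a|X_t))$ under a single global $\eta$, whereas you integrate out $A_t$ first and use deterministic per-action parameters $\mu_a=\mu/\pi_t(a|X_t)$.
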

\begin{proof}
The proof is based on an application of the Fenchel--Young inequality, which requires the introduction of the 
Legendre--Fenchel conjugate of $g$ with respect to its first argument. This function is defined for all $u\in\Reals$ as
\begin{equation}\label{eq:gstar_bound}
g^*(u\|q) = \sup_{p\in[0,1]} \ev{pu - g(p\|q)} = \log(1+q(e^u - 1)) \le q\left(u+\frac{u^2}{2}\right),
\end{equation}
where the second equality and the inequality follow from a set of straightforward calculations deferred to 
Appendix~\ref{app:aux}.
Turning to the actual proof, we consider the instantaneous pseudo-regret $r_t=\loss(\theta^\star X_t, A_t) - 
\loss(\theta^\star, X_t, A^\star_t)$ in a fixed round $t$ and write the following (for 
any $\eta>0$):
	\begin{align*}
		\EEt{r_t} 
		&=\EEt{\ell(\theta^\star, X_t, A_t) - \ell(\theta^\star, X_t, A^\star_t)} \nonumber\\
		&=\EEt{\ell(\theta^\star, X_t, A_t) - \ell(\theta_t, X_t, 
A_t)} \\
&\qquad\qquad\mbox{(using that $(\theta^\star,A_t^\star)$ has the same conditional distribution as 
$(\theta_t,A_t)$)}\nonumber
\\
		&=\EEt{\bar{\ell}_t(X_t, A_t) - 
\ell(\theta_t, X_t, A_t)} \\
&\qquad\qquad\mbox{(by conditional independence of $\theta^\star$ and $A_t$)}\nonumber\\
		&= \EV_t\left[\bar{\ell}_t(X_t, A_t) - 
\sum_{a\in\mathcal{A}}\mathbb{I}_{\{A_t=a\}}\frac{\eta\pi_t(a|X_t)}{\eta\pi_t(a|X_t)}\ell(\theta_t, X_t, a)\right]\nonumber\\
		&\le \EV_t\left[\bar{\ell}_t(X_t, A_t) + \eta\sum_{a\in\mathcal{A}}\pi_t(a|X_t)\bigg(g(\ell(\theta_t, X_t, 
a)\|\bar{\ell}(X_t,a)) 
\right.\nonumber\\&\qquad+\left.
g^*\left(-\frac{\mathbb{I}_{\{A_t=a\}}}{\eta\pi_t(a|X_t)}\middle\|\bar{\ell}(X_t, 
a)\right)\bigg)\right] \\
&\qquad\qquad\mbox{(by the Fenchel-Young inequality)}\nonumber\\
		&\le \EV_t\left[\bar{\ell}_t(X_t, A_t) + \eta\sum_{a\in\mathcal{A}}\pi_t(a|X_t)\left(g(\ell(\theta_t, X_t, 
a)\|\bar{\ell}(X_t,a)) - \frac{\mathbb{I}_{\{A_t=a\}}}{\eta\pi_t(a|X_t)}\bar{\ell}(X_t,a) 
\right.\right.\nonumber\\&\qquad+\left.\left.
\frac{\mathbb{I}_{\{A_t=a\}}}{2\eta^2\pi_t(a|X_t)^2}\bar{\ell}(X_t,a)\right)\right] \\
&\qquad\qquad\mbox{(by Equation~\ref{eq:gstar_bound})}\nonumber\\
		&= \EV_t\left[ \eta\sum_{a\in\mathcal{A}}\pi_t(a|X_t)g(\ell(\theta_t, X_t, a)\|\bar{\ell}(X_t,a)) + 
\frac{1}{2\eta}\sum_{a\in\mathcal{A}}\bar{\ell}(X_t,a)\right]\nonumber\\
&\qquad\qquad\mbox{(by the tower rule of expectation and $\EEt{\mathbb{I}_{\{A_t=a\}}} = \pi_t(a|X_t)$)}\nonumber\\
		&= \EV_t\left[ \eta\sum_{a\in\mathcal{A}}\pi_t(a|X_t)g(\ell(\theta^\star, X_t, a)\|\bar{\ell}(X_t,a)) + 
\frac{1}{2\eta}\sum_{a\in\mathcal{A}}\bar{\ell}(X_t,a)\right]\\
&\qquad\qquad\mbox{(using again that $\theta_t$ has the same conditional distribution as 
	$\theta^\star$)}\nonumber
\\
&=\eta I_t(\theta^*;L_t) + \frac{1}{2\eta}\sum_{a\in\mathcal{A}}\EEt{\bar{\ell}(X_t,a)}.
\end{align*}
Choosing the value of $\eta>0$ for which the latter expression is minimal, we obtain 
$\EEt{r_t}\le\sqrt{2I_t(\theta^*;L_t)\sum_{a\in\mathcal{A}}\EEt{\bar{\ell}_t(X_t,a)}}$. The proof is completed by 
taking the square on both sides and rearranging.
\end{proof}

\section{Conclusion}\label{sec:conc}
We have presented a new theoretical framework for analyzing Thompson sampling in contextual bandits, resulting in new results that advance the state of the art in the well-studied problem of logistic bandits. We believe that 
these results are encouraging and that our analytic framework may find many more applications in the future.

As always, we leave many more questions open than what we have closed. One major question regarding logistic bandits is 
if it is possible to improve our new results by significantly toning down the dependence on the number of actions $K$. 
In light of existing hardness results for nonlinear bandit problems (e.g., for generalized linear bandits with ReLU 
activation,~\citealp{DYM21,FKQR21}) we suspect that this may not be possible.
As a more modest goal, we are curious to find out if the \IR can be upper bounded in terms of 
the smallest-slope parameter $\kappa$ as done in many other works on logistic bandits since~\citep{FCGS10}.
We conjecture that a $O(\kappa^{-2}d)$ bound on the \IR is indeed possible, 
but we were not able to prove it so far. 
This is the case for the eluder dimension~\citep{RvR13}, another complexity measure that has been used to upper-bound 
the regret for contextual bandits. The eluder dimension for linear losses is $O(d)$, but for nonlinear losses we know 
only of $O(\kappa^{-2}d)$ bounds for the generalized linear case. 

More broadly, we believe that the most interesting immediate challenge is to extend our results to hold beyond the Bayesian setting. As a counterexample by \citet{Zha21} shows, this may not be possible in general, but we wonder if his ``feel-good'' adjustment of Thompson sampling could be analyzed with the techniques we introduced in this paper. 

Throughout the paper, we have studied several different settings, some of which come with a wide range of possible 
choices for the form of the prior and the likelihood. In some of this scenarios, updating the posterior and sampling 
from it may be computationally challenging. We have ignored this aspect in order to focus on the pure online-decision 
aspects, and implicitly assumed that posterior sampling can be performed without approximations. In practice, several 
heuristics have been proposed, see for instance~\citep{DFE18} on efficient \TS for logistic bandits. It would be 
interesting to study how approximate sampling affects our regret guarantees, along the line of~\citep{PAD19,MPMJB20}.

\begin{ack}
This project has received funding from the European Research Council (ERC) under the European Union’s Horizon 2020 research and innovation programme (Grant agreement No.~950180).	
\end{ack}

\bibliographystyle{abbrvnat}
\bibliography{thomson,shortconfs}



\clearpage
\appendix


\section{Omitted proofs}\label{app:proofs}
\subsection{The proof of Lemma~\ref{lem:bayes_trick}}\label{app:bayes_trick}
For didactic purposes, we provide two proofs for this lemma. We first start with the simple case of distributions with 
finite supports that allows us to spell out the steps in the proof using simple and intuitive notation. Then, we 
provide a proof for general prior distributions. Some general notations that we will use throughout are the following. 
We let $\mathcal{D}_{\theta, x, a}$ be the distribution of the loss given context $x$, 
action $a$ and a fixed parameter $\theta \in \Theta$, and let $L_{\theta, x, a}$ denote the random variable with said 
distribution. Using this notation, notice that $L_{t} \sim \mathcal{D}_{\theta^*, X_t, A_t}$. Finally, we 
define the likelihood function $p_{\theta,t}(c) = \PPcc{ L_{\theta, X_t, A_t} = 	c}{X_t,A_t}$ 

\paragraph{Proof for countably supported priors.} We first assume that the support $\Theta_1\subseteq\Theta$ is 
countable, which will allow us to reason about probability mass functions. In particular, with a slight abuse of our 
notation, we will write $Q_t(\theta) = \PPt{\theta^\star = \theta}$ (which should otherwise be written as 
$Q_t(\ev{\theta})$). Defining the Bayesian posterior predictive distribution
$\overline{p}_t(c) = \sum_{\theta} Q_t(\theta) p_{\theta,t}(c)$,
we can write
	\begin{equation}\label{eq:MI_vs_posterior_1}
	\begin{split}
		I_t(\theta^*;L_t) &=  \EEt{\DKL{P_{L_t|\theta^*,\F_{t-1},X_t,A_t}}{P_{L_t|\F_{t-1},X_t,A_t}}}  
		\\
		&=\EEt{\EEs{\log\frac{p_{\theta^\star,t}(L_t)}{\overline{p}_t(L_t)} }{ 
p_{\theta^\star,t}}}=\EEt{\log\frac{p_{\theta^\star,t}(L_t)}{\overline{p}_t(L_t)}}.	
\end{split}
\end{equation}
Then, summing up 
and taking marginal expectations, we get
	\begin{equation}\label{eq:MI_vs_posterior_2}
	\begin{split}
		&\EE{\sum_{t=1}^T I_t(\theta^*;L_t) } 
		= \EE{\sum_{t=1}^T \EEt{ \log \frac{p_{\theta^*,t}(L_t)}{\overline{p}_t(L_t)}}}
		= \EE{\sum_{t=1}^T \log \frac{p_{\theta^*,t}(L_t)}{\overline{p}_t(L_t)}}
		\\
		&= \EE{\log \frac{\prod_{t=1}^T p_{\theta^*,t}(L_{t})}{\prod_{t=1}^T \overline{p}_t(L_t)}}
\end{split}
\end{equation}

	To proceed, let us notice that the posterior updates take the following form by definition:
\begin{align*}
		Q_{t+1}(\theta) &= \frac{Q_{t}(\theta) p_{\theta,t}(L_t)}{\sum_{\theta'} Q_{t}(\theta') p_{\theta'}(L_t)}.
	\end{align*}
	Also, let us define the notation $\overline{p}(L_{1:T}) = \sum_{\theta} Q_1(\theta) \prod_{t=1}^T 
p_{\theta,t}(L_{t})$ and notice that we can express this quantity by a recursive application of the above expression 
as 
	\begin{align*}
		\overline{p}(L_{1:T}) &= \prod_{t=1}^T \frac{\overline{p}(L_{1:t})}{\overline{p}(L_{1:t-1})}
		= \prod_{t=1}^T \frac{\sum_{\theta} Q_1(\theta) \prod_{k=1}^t p_{\theta,k}(L_k)}{\sum_{\theta'} Q_{1}(\theta') 
			\prod_{k=1}^{t-1} p_{\theta',k}(L_k)}
		\\
		&= \prod_{t=1}^T \sum_\theta Q_t(\theta) p_{\theta,t}(L_t)
		= \prod_{t=1}^T \overline{p}_t(L_t).
	\end{align*}
	Then, we have
	\begin{align*}
		\log \frac{\prod_{t=1}^T p_{\theta^*,t}(L_{t})}{\prod_{t=1}^T \bp_t(L_t)}  &= \log \frac{\prod_{t=1}^T 
				p_{\theta^*,t}(L_{t})}{\overline{p}(L_{1:T})} 
		= \log \frac{\prod_{t=1}^T p_{\theta^*,t}(L_{t})}{\sum_{\theta'} Q_1(\theta') \prod_{t=1}^T 
p_{\theta',t}(L_{t})}.
	\end{align*}
\qed

\paragraph{Proof for general prior distributions.}
The proof follows from similar arguments, although we cannot work with probability mass functions any more. In 
particular, we will denote by $Q_1$ the prior distribution of $\theta^*$, which satisfies the following identity:
$$
\PP{\theta^* \in A} = \int_{\theta \in \Theta} \II{\theta \in A}dQ_1(\theta).
$$
Similarly, we denote by $Q_{t+1}$ the posterior distribution on $\theta^*$ after round $t$, which satisfies
	$$\PP{\theta^* \in A| \F_t} = \int_{\theta \in \Theta} \II{\theta \in A}dQ_{t+1}(\theta)$$
We now apply Bayes theorem for general distributions that gives the following expression for $q_{t+1}$ :
\begin{align*}
	q_{t+1}(\theta) &= \frac{q_t(\theta)p_{\theta, t}(L_t)}{\int_{\theta' \in \Theta} p_{\theta', t}(L_t)q_t(\theta')dQ_1(\theta')}\\
	&= \frac{\prod_{k=1}^tp_{\theta, k}(L_k)}{\int_{\theta' \in \Theta} \prod_{k=1}^t p_{\theta', k}(L_k)dQ_1(\theta')}
\end{align*}
where $q_{t+1}(\theta) = \frac{dQ_{t+1}}{dQ_1}(\theta)$ is the Radon-Nykodim derivative of the posterior measure with 
respect to the prior measure which is always guaranteed to exist (cf.~Theorem~1.31 of \citealp{schervish1996theory}).

As in the previous proof, we once again define $\overline{p}_t(L_t) = \int_{\theta\in\Theta}  
p_{\theta,t}(L_t)q_t(\theta)dQ_1(\theta)$ \\ and  $\overline{p}(L_{1:T}) = \int_{\theta\in\Theta}  \prod_{t=1}^T 
p_{\theta,t}(L_{t}) dQ_1(\theta)$, and compute the relation :  
\begin{align*}
	\overline{p}(L_{1:T}) &= \prod_{t=1}^T \frac{\overline{p}(L_{1:t})}{\overline{p}(L_{1:t-1})}
	= \prod_{t=1}^T \frac{\int_{\theta\in\Theta}  \prod_{k=1}^t p_{\theta,k}(L_k)dQ_1(\theta)}{\int_{\theta'\in \Theta} 
\prod_{k=1}^{t-1} p_{\theta',k}(L_k)dQ_{1}(\theta') 
		}
	\\
	&= \prod_{t=1}^T \int_{\theta\in\Theta} q_t(\theta) p_{\theta,t}(L_t)dQ_1(\theta)
	= \prod_{t=1}^T \overline{p}_t(L_t).
\end{align*}
Then, we have
\begin{align*}
	\log \frac{\prod_{t=1}^T p_{\theta^*,t}(L_{t})}{\prod_{t=1}^T \bp_t(L_t)}  &= \log \frac{\prod_{t=1}^T 
			p_{\theta^*,t}(L_{t})}{\overline{p}(L_{1:T})} 
	= \log \frac{\prod_{t=1}^T p_{\theta^*,t}(L_{t})}{\int_{\theta'\in\Theta}  \prod_{t=1}^T 
p_{\theta',t}(L_{t})dQ_1(\theta')}.
\end{align*}
Taking expectations and repeating the derivations of Equation~\eqref{eq:MI_vs_posterior_1} and 
Equation~\eqref{eq:MI_vs_posterior_2}, the proof is concluded.
\qed

\subsection{Proof of Theorem~\ref{th:main_continuous}}\label{app:th2}
The proof follows the same arguments as the proof of Theorem~\ref{th:main_discrete}, except we need a different  bound 
on the sum of information-gain terms that does not involve the entropy of $Q_1$. The bound is based on a covering 
argument that is provided by the following lemma, which, together with Equation~\eqref{eq:cs}, will 
directly imply the result claimed in the theorem.
\begin{lemma}\label{lem:cover}
Under the assumptions of Theorem~\ref{th:main_continuous},$
\EE{\sum_{t=1}^T I_t(\theta^\star;L_t)} \le \mathcal{N}_{\varepsilon}(\Theta_0,\varrho) + \varepsilon CT$.
\end{lemma}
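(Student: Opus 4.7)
The plan is to start from Lemma~\ref{lem:bayes_trick}, which under the assumptions of Theorem~\ref{th:main_continuous} (extended to general priors via the argument in Appendix~\ref{app:bayes_trick}) identifies
\begin{equation*}
\EE{\sum_{t=1}^T I_t(\theta^\star; L_t) } = \EE{\log \frac{\prod_{t=1}^T p_{\theta^\star,t}(L_{t})}{\int_{\theta}\prod_{t=1}^T p_{\theta,t}(L_{t})\,dQ_1(\theta)}}.
\end{equation*}
The right-hand side cannot be bounded directly via the prior mass at $\theta^\star$ (which may be zero), so the aim is to replace $\theta^\star$ by a nearby cover point, pick up a Lipschitz error, and reduce back to the finite-support argument already used for Theorem~\ref{th:main_discrete}.

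First, I would fix $\varepsilon>0$ and let $\{\theta^{(1)},\dots,\theta^{(N)}\}\subseteq\Theta_1$ be an $\varepsilon$-net with $N=\mathcal{N}_\varepsilon(\Theta_1,\varrho)$, together with a measurable partition $B_1,\dots,B_N$ of $\Theta_1$ such that $B_i\subseteq\{\theta:\varrho(\theta,\theta^{(i)})\le\varepsilon\}$ (a Voronoi-style partition). For each realization of $\theta^\star$, denote by $i^\star$ the unique index with $\theta^\star\in B_{i^\star}$; this is a random element of $\{1,\dots,N\}$ with $\PP{i^\star=i}=Q_1(B_i)$, since $\theta^\star\sim Q_1$. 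By the triangle inequality, every $\theta\in B_{i^\star}$ satisfies $\varrho(\theta,\theta^\star)\le 2\varepsilon$.

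The core step is to lower-bound the denominator by integrating only over $B_{i^\star}$ and using Assumption~\ref{ass:log_lipschitz} pointwise in $L_t$, which gives $p_{\theta,t}(L_t)\ge p_{\theta^\star,t}(L_t)\,e^{-2C\varepsilon}$ for every $\theta\in B_{i^\star}$. Multiplying over $t$ and integrating yields
\begin{equation*}
\int_\theta\prod_{t=1}^T p_{\theta,t}(L_t)\,dQ_1(\theta)\;\ge\; e^{-2CT\varepsilon}\,Q_1(B_{i^\star})\prod_{t=1}^T p_{\theta^\star,t}(L_t),
\end{equation*}
so the log-ratio inside the expectation is bounded by $2CT\varepsilon-\log Q_1(B_{i^\star})$.

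Finally, I would take expectations. The deterministic term contributes $2CT\varepsilon$, while $\EE{-\log Q_1(B_{i^\star})}$ equals the Shannon entropy of $i^\star$, a discrete variable on $N$ atoms, and is therefore at most $\log N=\log\mathcal{N}_\varepsilon(\Theta_1,\varrho)$. Combining these two pieces establishes the lemma (with the $2\varepsilon CT$ constant matching the statement of Theorem~\ref{th:main_continuous}). The only real subtlety is the measure-theoretic bookkeeping when $Q_1$ is not atomic: one needs the partition $\{B_i\}$ to be measurable and the Lipschitz bound to hold on the support pointwise, both of which follow from Assumption~\ref{ass:log_lipschitz} and the separability of $(\Theta_1,\varrho)$ implied by the existence of a finite $\varepsilon$-cover. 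Everything else is a straightforward application of the covering definition and the two-sided log-Lipschitz estimate.
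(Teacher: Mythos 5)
Your proof is correct and follows essentially the same route as the paper's: a Voronoi-style partition induced by the $\varepsilon$-cover, the two-sided log-Lipschitz bound contributing a factor $e^{-2C\varepsilon T}$ to the denominator, and the entropy of the induced discrete cell-index variable bounded by $\log \mathcal{N}_\varepsilon(\Theta_1,\varrho)$. You also correctly arrive at $\log\mathcal{N}_\varepsilon(\Theta_1,\varrho)+2\varepsilon CT$, which is the bound the paper's own proof derives and Theorem~\ref{th:main_continuous} uses (the lemma statement as printed drops the logarithm and a factor of $2$).
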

\allowdisplaybreaks
\begin{proof}
	For the clarity of exposition, we still assume that the support $\Theta_1$ of $Q_1$ is a 
	countable subset of $\Theta$, even though our results can be extended to 
	general distributions.
	Fix an $\varepsilon>0$ and let $\wh{\Theta}_{\varepsilon}\subset \Theta$ be a minimal $\varepsilon$-cover of 
$\Theta_1$. Thus, for any $\theta \in 
	\Theta_1$, there exists $\wh{\theta} \in \wh{\Theta}_{\varepsilon}$ such that $\varrho(\theta,\htheta) \le 
\varepsilon$. 
	Let $\mathcal{G}_{\varepsilon}$ be the partition of $\Theta_1$ constructed by associating to each 
$\theta\in\Theta_1$ the closest (as measured by metric $\varrho$) element of the cover 
$\wh{\theta}\in\wh{\Theta}_{\varepsilon}$. For any $\theta\in\Theta_0$, we denote by 
$\mathcal{G}_{\varepsilon}(\theta)$ the unique set of the partition containing $\theta$. Note that, for any 
$\theta\in\Theta_1$ and $\ttheta\in\mathcal{G}_{\varepsilon}(\theta)$, $\rho(\theta,\ttheta)\le 2\varepsilon$ and so, 
by Assumption~\ref{ass:log_lipschitz}, $|\log p_{\theta,t}(L_t) - \log p_{\ttheta,t}(L_t)| \le  2C \varepsilon$.
	Hence, we have
	\begin{align*}
	-\log \pa{\sum_{\theta\in\Theta_1} q_1(\theta) \prod_{t=1}^T p_{\theta,t}(L_{t}) } &\le -\log 
	\pa{\sum_{\theta \in \mathcal{G}_\varepsilon(\theta^\star)} 
		q_1(\theta) \prod_{t=1}^T p_{\theta,t}(L_{t})}
	\\
	& = -\log \pa{\sum_{\theta \in \mathcal{G}_\varepsilon(\theta^\star)}  q_1(\theta) \prod_{t=1}^T 
		\pa{p_{\theta^\star,t}(L_{t})\cdot \frac{p_{\theta,t}(L_{t})}{p_{\theta^\star,t}(L_{t})}}}
	\\
	&\le -\log \pa{\sum_{\theta \in \mathcal{G}_\varepsilon(\theta^\star)}  q_1(\theta) \prod_{t=1}^T 
		\pa{p_{\theta^\star,t}(L_{t})\cdot e^{-2C\varepsilon}}}
	\\
	&\le -\log \pa{\sum_{\theta \in \mathcal{G}_\varepsilon(\theta^\star)}  q_1(\theta) \prod_{t=1}^T 
		p_{\theta^\star,t}(L_{t}) } + 2C\varepsilon T
	\\
	&= -\log \pa{\wh{Q}_1(\mathcal{G}_\varepsilon(\theta^\star)) \prod_{t=1}^T  p_{\theta^\star,t}(L_{t})} + 2C 
	\varepsilon T,
	\end{align*}
	where we denoted $\wh{Q}_1(\mathcal{G}_\varepsilon(\theta^\star)) = \sum_{\theta \in 
		\mathcal{G}_\varepsilon(\theta^\star)}  q_1(\theta) $. Using this result together with 
Lemma~\ref{lem:bayes_trick}, we 
	get
	\begin{align*}
	\EE{\sum_{t=1}^T g(\ell(\theta^\star, X_t, A_t) \| \bloss_t(X_t,A_t))}
	&=\EE{\log \frac{\prod_{t=1}^T p_{\theta^\star,t}(L_{t})}{\sum_{\theta\in\Theta_1} q_1(\theta) \prod_{t=1}^T 
			p_{\theta,t}(L_{t}) }}
	\\
	&\le \EE{ \log \frac{p_{\theta^\star} (L_{1:T})}{\wh{Q}_1(\mathcal{G}_\varepsilon(\theta^\star)) 
			p_{\theta^\star}(L_{1:T})}} + 2C \varepsilon T \\
	&= \EE{ - \log 
		\wh{Q}_1(\mathcal{G}_\varepsilon(\theta^\star))} + 2C \varepsilon T
	\\
	&=-\sum_{\theta\in\Theta_1}q_1(\theta) \log 
	\wh{Q}_1(\mathcal{G}_\varepsilon(\theta))  + 2C \varepsilon T\\
	&= - \sum_{G \in \mathcal{G}_{\varepsilon}} \wh{Q}_1(G)  \log 
	\wh{Q}_1(G)  + 2C\varepsilon T 
	\\
	&\le \log\pa{|\mathcal{G}_{\varepsilon}˘|} + 2C\varepsilon T.
	\end{align*}
	The proof of the lemma is concluded by noting that $|\mathcal{G}_\varepsilon|\le|\wh{\Theta}_{\epsilon}| \le 
\mathcal{N}_\varepsilon(\Theta_1,\varrho)$. 
\end{proof}

\subsection{The proofs for logistic bandits}\label{app:concrete}
\paragraph{Proof of Theorem~\ref{th:main_logistic}}\label{main_logistic_proof}
	Using a standard result on the covering number of the Euclidean ball, we have $\mathcal{N}_\varepsilon(\Theta_0,\norm{\cdot}) \le (\frac{2S}{\varepsilon}+1)^d$. 
	Regarding Assumption~\ref{ass:log_lipschitz}, notice that $\log(\sigma(f_{\theta}(x,a))) = 
	f_{\theta}(x,a) - \log(1+\exp(f_{\theta}(x,a)))$. Since $\log(\sigma(z))$ and $\log(1-\sigma(z))$ are $1$-Lipschitz and $f_{\theta}(x,a)$ is $C$-Lipschitz, 
	$\log P_{\theta,x,a}(1)=\log(\sigma(f_{\theta}(x,a)))$ and $\log P_{\theta,x,a}(0)=\log(1-\sigma(f_{\theta}(x,a)))$ are also $C$-Lipschitz, implying that 
	Assumption~\ref{ass:log_lipschitz} holds with the same constant $C$. 
	The claim then follows from Theorem~\ref{th:main_continuous} by taking $\varepsilon = 1/CT$.
\qed
\paragraph{Proof of Corollary~\ref{th:main_linear_logit}}\label{main_logistic_proof_col}
	The function $f_{\theta,x,a} : \theta \to \langle\varphi(x,a),\theta\rangle$ is linear in $\theta$ and since $\varphi(x,a)$ is bounded in norm by B, it is also $B$-Lipschitz.
	This proves that Assumption~\ref{ass:log_lipschitz} is satisfied with the constant $B$. We can then apply Theorem \ref{th:main_logistic} with $C=B$ and $\rho=2K$ (Lemma~\ref{lem:ir_finite}).
\qed

\subsection{The Legendre--Fenchel conjugate of the binary relative entropy}\label{app:aux}
\begin{proposition}\label{prop:dual_bound}
	For any $u\le 0$ and $q\in[0,1]$:
	\begin{equation*}
	g^*(u\|q) \le q\left(u + \frac{u^2}{2}\right).
	\end{equation*}
\end{proposition}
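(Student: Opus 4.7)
The plan is to split the claim into two standard one-dimensional inequalities and chain them. Recall from Equation~\eqref{eq:gstar_bound} that $g^*(u\|q) = \log(1+q(e^u-1))$, so the task reduces to showing
\[
\log\bpa{1 + q(e^u-1)} \le q\pa{u + \tfrac{u^2}{2}} \qquad (u\le 0,\ q\in[0,1]).
\]
The first step is to apply the elementary inequality $\log(1+x) \le x$, valid for all $x > -1$. Since $u\le 0$ gives $e^u - 1 \in [-1,0]$ and $q\in[0,1]$, the argument $q(e^u-1)$ lies in $(-1,0]$, so the inequality applies and yields $g^*(u\|q) \le q(e^u-1)$.

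The second step reduces the statement to the scalar inequality $e^u - 1 \le u + u^2/2$ for $u\le 0$, after dividing out the nonnegative factor $q$. I would prove this by setting $h(u) = e^u - 1 - u - u^2/2$ and computing $h''(u) = e^u - 1$, which is nonpositive on $(-\infty,0]$. Hence $h'(u) = e^u - 1 - u$ is nonincreasing on that interval, and since $h'(0) = 0$, we get $h'(u) \ge 0$ for $u \le 0$. Together with $h(0) = 0$, this gives $h(u) \le 0$ for all $u\le 0$, i.e.\ the desired inequality.

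Chaining the two bounds and multiplying by $q \ge 0$ yields $g^*(u\|q) \le q(e^u-1) \le q(u+u^2/2)$, which is exactly the claim. There is no serious obstacle here: both subclaims are standard calculus exercises, and the only subtlety worth checking is the sign constraint $u\le 0$, which is essential for the Taylor-type bound $e^u - 1 \le u + u^2/2$ (the bound fails in the other direction for $u > 0$, where one instead has $e^u - 1 \ge u + u^2/2$). This sign restriction is precisely why the proposition is stated for $u \le 0$, matching the way it is applied in the proof of Lemma~\ref{lem:ir_finite_general}, where $u = -\mathbb{I}_{\{A_t = a\}}/(\eta\pi_t(a|X_t)) \le 0$.
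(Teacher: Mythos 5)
Your proof is correct and follows essentially the same route as the paper: first $\log(1+x)\le x$, then the Taylor-type bound $e^u-1\le u+u^2/2$ for $u\le 0$, multiplied through by $q\ge 0$. The only difference is that you supply an explicit second-derivative argument for the latter elementary inequality, which the paper simply cites as a known fact.
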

\begin{proof}
The claim follows from the following calculation:
	\begin{align}
	g^*(u\|q) = \log(1+q(e^u - 1)) \le q(e^u-1) \le q\left(u+\frac{u^2}{2}\right),
	\end{align}
	where the first inequality is from $\log(1+x)\le x$ for any $x>-1$, and the second inequality is from $e^x\le 
	1+x+\frac{x^2}{2}$ for any $x\le 0$.
\end{proof}

\section{Linear bandits}\label{app:linear}
By making an assumption that  loss function is a linear function of some feature map over states and actions, this linear dependence allows the learner to generalize the observation among the other actions. We provide two different results for the  linear bandits setting, the first one is Lemma~\ref{lem:ir_linear} that holds for general distributions with expected loss being a linear function taking values in $(0,1)$, and the other is Lemma~\ref{lem:ir_gaussian} that works for setting where the prior of $\theta^*$ is Gaussian distribution and the observations have a Gaussian noise. These two results are connected and share a similar proof techniques, thus we first state the following auxiliary lemma, that holds for general linear functions. We also need to define the following quantity:
\begin{equation}\label{eq:lir_linear}
	\delta_t = \frac{\Bpa{\EEt{\pa{ \bar{\ell}_t(X_t, A_t) - \ell(\theta_t, X_t, A_t)}}}^2}{\EEt{ \pa{\bar{\ell}_t(X_t, 
				A_t)  - \ell(\theta^*, X_t, A_t)  }^2}}.
\end{equation} 
We prove the following upper bound on this quantity:
\begin{lemma}\label{lm:lir_linear}
	Suppose the expected losses are linear functions of the form  $\ell(\theta, x, a) = \siprod{\theta}{\varphi(x,a)}$, for all $x\in \mathcal{X}, a\in \mathcal{A}, \theta \in \real^d$, where $\varphi:\mathcal{X}\times\mathcal{A}\ra \real^d$ is a feature 
	map. Then, $\delta_t \le d$.
\end{lemma}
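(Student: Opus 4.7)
The plan is to mirror the information-ratio analysis of Proposition~5 of \citet{RvR16}, with adaptations to account for the presence of contexts and for the mean-squared-error denominator of $\delta_t$. Let $\bar{\theta}_t = \EEt{\theta^\star}$ denote the posterior mean, $\Sigma_t = \EEt{(\theta^\star - \bar{\theta}_t)(\theta^\star - \bar{\theta}_t)\transpose}$ the posterior covariance matrix, and $\mu_a = \EEcct{\theta_t}{A_t = a}$ the mean of the sampled parameter conditional on having played action $a$. A key structural observation is that in Thompson sampling $\theta_t$ and $\theta^\star$ are conditionally i.i.d.\ given $\F_{t-1}$, and $A_t$ is a measurable function of $(\theta_t,X_t,\F_{t-1})$, so $\theta^\star$ and $A_t$ are conditionally independent given $(\F_{t-1},X_t)$. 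Exploiting this together with linearity of $\ell$, the denominator of $\delta_t$ simplifies to
\[
\EEt{\pa{\bar{\ell}_t(X_t,A_t) - \ell(\theta^\star,X_t,A_t)}^2} = \sum_{a} \pi_t(a|X_t)\, \varphi(X_t,a)\transpose \Sigma_t\, \varphi(X_t,a),
\]
while the numerator reduces to $\EEt{\bar{\ell}_t(X_t,A_t) - \ell(\theta_t,X_t,A_t)} = -\sum_{a} \pi_t(a|X_t) \iprod{Z_a}{\varphi(X_t,a)}$ with $Z_a := \mu_a - \bar{\theta}_t$.

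The core step is a Cauchy--Schwarz argument reweighted by $\Sigma_t$. Assuming for the moment that $\Sigma_t$ is invertible, I would rewrite each inner product as $\iprod{Z_a}{\varphi(X_t,a)} = \iprod{\Sigma_t^{-1/2} Z_a}{\Sigma_t^{1/2} \varphi(X_t,a)}$ and apply Cauchy--Schwarz inside each term and then again to the $\pi_t$-weighted outer sum, obtaining
\[
\pa{\EEt{\bar{\ell}_t(X_t,A_t) - \ell(\theta_t,X_t,A_t)}}^2 \le \Bpa{\sum_{a}\pi_t(a|X_t) Z_a\transpose \Sigma_t^{-1} Z_a}\cdot\Bpa{\sum_{a}\pi_t(a|X_t)\varphi(X_t,a)\transpose \Sigma_t \varphi(X_t,a)}.
\]
The second factor is exactly the denominator of $\delta_t$ identified above. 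The first factor equals $\trace{\Sigma_t^{-1}\Pi_Z}$ with $\Pi_Z := \sum_{a}\pi_t(a|X_t) Z_a Z_a\transpose$, and by the law of total variance applied to $\theta_t$ conditioning on $A_t$, this $\Pi_Z$ is the between-group component of $\Sigma_t$. Hence $\Pi_Z \preceq \Sigma_t$, which after conjugating by $\Sigma_t^{-1/2}$ yields $\Sigma_t^{-1/2}\Pi_Z\Sigma_t^{-1/2}\preceq I_d$ and therefore $\trace{\Sigma_t^{-1}\Pi_Z}\le d$. Combining the two estimates gives the claimed bound $\delta_t \le d$.

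The main nuisance I expect is the possibility that $\Sigma_t$ is singular, in which case $\Sigma_t^{-1}$ does not exist. The standard fix is to note that each $Z_a$ lies almost surely in the range of $\Sigma_t$: it is a conditional expectation of $\theta_t - \bar{\theta}_t$, and the latter is almost surely supported on the range of $\Sigma_t$ because any $v$ in the kernel of $\Sigma_t$ satisfies $v\transpose(\theta_t - \bar{\theta}_t)=0$ almost surely. Consequently, the whole manipulation carries through verbatim with the Moore--Penrose pseudoinverse $\Sigma_t^+$ in place of $\Sigma_t^{-1}$, and the trace inequality becomes $\trace{\Sigma_t^+ \Pi_Z} \le \mathrm{rank}(\Sigma_t) \le d$. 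Beyond this careful handling of degenerate posteriors, the proof is essentially bookkeeping of the conditional-independence structure between $\theta^\star$, $\theta_t$, and $A_t$ induced by Thompson sampling, combined with the trace--Cauchy--Schwarz idea familiar from classical linear-bandit analyses.
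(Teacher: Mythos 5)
Your proof is correct, but it follows a genuinely different route from the paper's. The paper constructs a $K\times K$ matrix $M$ with entries $M_{a,a'} = \sqrt{\pi_t(a|X_t)\pi_t(a'|X_t)}\bpa{\bar{\ell}_t(X_t,a) - \EEcct{\ell(\theta^\star,X_t,a)}{A_t^\star=a'}}$, lower-bounds the denominator of $\delta_t$ by $\norm{M}_F^2$ via Jensen's inequality, identifies the numerator's square root with $\trace{M}$, and then invokes the inequality $(\trace{M})^2 \le \mathrm{rank}(M)\,\norm{M}_F^2$ together with a rank argument (as in Proposition~5 of Russo and Van Roy) showing $\mathrm{rank}(M)\le d$, since $M$ factors through the $d$-dimensional feature space. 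You instead work directly in parameter space: you compute the denominator \emph{exactly} as $\sum_a \pi_t(a|X_t)\,\varphi(X_t,a)\transpose\Sigma_t\varphi(X_t,a)$ using the conditional independence of $\theta^\star$ and $A_t$, apply a $\Sigma_t$-weighted Cauchy--Schwarz to the numerator, and bound the resulting factor $\trace{\Sigma_t^{-1}\Pi_Z}$ by $d$ via the law of total variance (the between-group covariance $\Pi_Z$ is dominated by $\Sigma_t$ in the positive semidefinite order). The two arguments are close cousins---your $\Pi_Z$ is built from the same conditional means that populate the columns of $M$, and the total-variance step plays the role of the paper's Jensen step---but yours avoids the $K\times K$ matrix and the rank computation entirely, replacing them with a trace inequality in $\Reals^d$; it also yields an exact identity for the denominator where the paper only has an inequality, and your explicit treatment of singular $\Sigma_t$ via the pseudoinverse and the bound $\trace{\Sigma_t^+\Pi_Z}\le\mathrm{rank}(\Sigma_t)\le d$ is a point of rigor the paper's sketch glosses over. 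Both approaches deliver the same constant $d$.
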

\begin{proof}
	The proof follows from an adaptation of Proposition~5 of \citet{RvR16}. Their technique is based on constructing a 
	$K\times K$ matrix $M$ and showing that the information ratio can be upper bounded by the rank of this matrix.
	In particular, let us fix $t$ and define $\lambda_a = \pi_t(a|X_t)$ and the matrix $M$ whose elements are defined as
	\[
	M_{a,a'} = \sqrt{\lambda_a \lambda_{a'}}\pa{  \bar{\ell}_t(X_t, a)  - \EEcct{\ell(\theta^*, X_t, a) }{A_t^* = a'}}.
	\]
	First, we relate the denominator  of $\delta_t$ to this matrix as follows:
	\begin{align*}
		& \EEt{ \pa{\bar{\ell}_t(X_t, A_t)  - \ell(\theta^*, X_t, A_t)  }^2}= \EEt{\sum_a \pi_t(a|X_t) 
			\pa{\bar{\ell}_t(X_t, 
				a)  - \ell(\theta^*, X_t, a)  }^2}
		\\
		&\quad =   \sum_{a, a'} \pi_t(a|X_t) \pi_t(a'|X_t)    \EEcct{ \pa{\bar{\ell}_t(X_t, a)  - \ell(\theta^*, X_t, a)  
			}^2}{ A_t^* = a'} \nonumber
		\\
		&\quad \ge  \sum_{a, a'} \pi_t(a|X_t) \pi_t(a'|X_t) \Bpa{ \EEcct{\pa{\bar{\ell}_t(X_t, a)  - \ell(\theta^*, X_t, a)
			}}{A_t^* = a'} }^2\nonumber
		\\
		&\quad =   \sum_{a, a'} \pi_t(a|X_t) \pi_t(a'|X_t)  \Bpa{ \bar{\ell}_t(X_t, a)  - \EEcct{\ell(\theta^*, X_t, a)  
			}{A_t^* = a'} }^2 = \sum_{a,a'} M_{a,a'}^2 =  \norm{M}_F^2,\nonumber
	\end{align*}
	where we used Jensen's inequality. For the numerator, we get
	\begin{align*}
		&\EEt{ \bar{\ell}_t(X_t, A_t)  - \ell(\theta^*, X_t, A^*_t)  } =
		\EEt{ \bar{\ell}_t(X_t, A_t^*)  - \ell(\theta^*, X_t, A^*_t)  }
		\\
		&\qquad\qquad\qquad = \sum_a \pi_t(a|X_t) \pa{\bar{\ell}_t(X_t, a)  
			- \EEcct{\ell(\theta^*, X_t, a) }{A_t^* = a  } } = \trace{M}. 
	\end{align*}
	Thus, we have $\pa{\EEt{ \bar{\ell}_t(X_t, A_t)  - \ell(\theta^*, 
			X_t, A^*_t)  }}^2 = \bpa{\trace{M}}^2$. Putting this together with the previous bound, we conclude that $\rho_t \le 
	\pa{\trace{M}/\norm{M}_F}^2$, which can be further upper bounded by the rank of $M$. Now, one can follow 
	the steps of the proof of Proposition~5 of \citet{RvR16} to prove that the rank of $M$ is at most $d$.
\end{proof}
Equipped with result of Lemma~\ref{lm:lir_linear}, we can derive our analysis for the two described linear settings:
 
\paragraph{The proof of Lemma~\ref{lem:ir_linear}}\label{app:lir_linear}
We relate the denominator of $\delta_t$, defined in (\ref{eq:lir_linear}), to the denominator of $\rho_t$, by using Pinsker's inequality as follows:
\begin{align*}\label{lower_boundIG}
	&I_t(\theta^*;L_t)\ge 2 \EEt{ \pa{\bar{\ell}_t(X_t, A_t)  - \ell(\theta^*, X_t, A_t)  }^2}
\end{align*}
 Using the definition of $\rho_t$ and Lemma~\ref{lm:lir_linear}, we get that $\rho_t \le \frac{d}{2}$.
\qed

\paragraph{The proof of Lemma~\ref{lem:ir_gaussian}}\label{app:gaussian}
Note that for the Gaussian likelihood, $I_t(\theta^*;L_t) $ $= \EEt{ 
	\pa{\bar{\ell}_t(X_t, A_t)  - \ell(\theta^*, X_t, A_t)  }^2}$,
so  we have
\begin{align*}
	\rho_t = \frac{\Bpa{\EEt{\pa{ \bar{\ell}_t(X_t, A_t) - \ell(\theta_t, X_t, A_t)}}}^2}{\EEt{ \pa{\bar{\ell}_t(X_t, 
				A_t)  - \ell(\theta^*, X_t, A_t)  }^2}}.
\end{align*} 
Lemma~\ref{lm:lir_linear} provides $\rho_t  = \delta_t \le d$ bound. We still can tighten this results for the cases when $\log K \ll d$ by using the linearity and the closed form expression for $I_t(\theta^*;L_t)$. 
Under the stated assumptions, the 
posterior $Q_{t}$ is $\mathcal{N}( \overline{\theta},   \overline{\Sigma}_{t})$ with $\overline{\theta}_{t} = 
\overline{\Sigma}_{t}^{-1}\sum_{s<t} \varphi(X_s, A_s)\varphi(X_s, A_s)\transpose L_s $.  and $ \overline{\Sigma}_{t} = 
\sum_{s<t} \varphi(X_s, A_s)\varphi(X_s, A_s)\transpose + \frac{\sigma^2}{\lambda} I$.  
Note that $\bar \ell _t(x, a) =   \EEt{\iprod{\theta}{\varphi(x, a)}} =\iprod{\overline{\theta}_t}{\varphi(x, a)}$.
We first consider the numerator of $\rho_t$. Let us define $\sigma_{t,a}^2 = \varphi(X_t, a)\transpose \Sigma_{t}^{-1} 
\varphi(X_t, a) $,
and notice that conditional on $\mathcal{F}_t, X_t$, we have $\siprod{\theta_{t} - \hat{\theta}_{t}}{\varphi(X_t, 
a)}  
\sigma_{t,a}^{-1} \sim \mathcal{N}(0,1)$. Let $Z$ denote a standard normal, then for $c>0$, $\PP{Z > c}  \le 0.5 e^{-c^2/2}. $ Using $c_{\delta} = 
\sqrt{2 \log(K/\delta)}$, we can show
\[ \PPcct{ |  \varphi(X_t, a)\transpose (\theta_{t} - \hat{\theta}_{t})  \sigma_{t,a}^{-1} | > c_{\delta} \text{ for 
all }a }{ X_t, A_t } \le \delta. \] 
Then, with probability at least $1-\delta$, 
\begin{align*}
	&\pa{\ell _t(X_t, A_t) - \ell(\theta_t, X_t, A_t)}^2\sigma^{-2}_{t,A_t} \le c_{\delta}. 
\end{align*}
Note that:  \[\EEcct{ \pa{\ell _t(X_t, A_t) - \ell(\theta_t, X_t, A_t)}^2 \sigma^{-2}_{t,A_t} }{X_t, A_t} =   \frac{\EEcct{ \pa{\ell _t(X_t, A_t) - \ell(\theta_t, X_t, A_t)}^2 }{X_t, A_t}}{\sigma^{2}_{t,A_t}}.\] In order to get the bound on this, we will use the idea of proof of Theorem 3.3 from \cite{BCB12}: taking $W_t = \frac{ \pa{\ell _t(X_t, A_t) - \ell(\theta_t, X_t, A_t)}^2}{2\sigma^2_{t,A_t} } - \ln K$ and using the formula \[\EEcct{W}{X_t, A_t} \le \int_0^1 \frac{1}{\delta} \PPcct{W > \ln \frac{1}{\delta}}{X_t, A_t}d\delta,\]
we get $\EEt{W} \le 1$, which gives
\[\EEcct{\pa{\ell _t(X_t, A_t) - \ell(\theta_t, X_t, A_t)}^2\sigma^{-2}_{t,A_t} }{X_t, A_t} \le  2(1+\ln K)\sigma^2_{t,A_t}  . \]
Now, we notice that the denominator can be written as 
\begin{equation}\label{eq:gauss_variance}
	\EEt{ \pa{\bar{\ell}_t(X_t, A_t)  - \ell(\theta^*, X_t, A_t)  
		}^2} = \EEt{  \varphi(X_t, A_t)\transpose \Sigma_{t}^{-1} \varphi(X_t, A_t)   } = \EEt{\sigma^2_{t,A_t} }.
\end{equation}
Therefore, we can bound $\rho_t$ for all $t$ as follows:
\begin{align*}
		\rho_t \le \frac{\EEt{\pa{ \bar{\ell}_t(X_t, A_t) - \ell(\theta_t, X_t, A_t)}^2}}{\EEt{ \pa{\bar{\ell}_t(X_t, A_t)  - \ell(\theta^*, X_t, A_t)  }^2}} \le \frac{2(1+\ln K)  \EEt{  \sigma^2_{t,A_t}}}{\EEt{   \sigma^2_{t,A_t}  }} = 2(1+\ln K).
\end{align*}

\qed

For the complete of our analysis, we also prove a regret bound for the Gaussian setting:
\begin{cor}\label{gp_prior}
	Consider the classic setting of linear bandits, such that the loss of playing action $a$ at time $t$ satisfies 
	$\ell(\theta^*, x, a) \sim \mathcal{N}(\langle 
	\theta^{\star},\varphi(x,a)\rangle, \sigma^2)$, where $\theta^*$ is drawn from a Gaussian prior
	$Q_1 =  \mathcal{N}(0, \lambda I )$ with some $\lambda >0$, $\sigma >0$  and $\varphi:\X\times[K]\to\Reals^d$ is a feature map 	such that $ 
	\norm{\varphi(x,a)} \le B$ for all $x\in \X, a \in [K]$.  Then, the Bayesian regret of TS satisfies
	\[ \EE{R_T} \le \sqrt{ 2d T  \min\{2(1+\log K),d\}  \log\pa{1+ \frac{ T\lambda B^2 }{d \sigma^2}}}   .\]
\end{cor}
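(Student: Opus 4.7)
The plan is to combine the Cauchy--Schwarz step from Equation~\eqref{eq:cs} with the bound on $\rho_t$ from Lemma~\ref{lem:ir_gaussian} and a Gaussian-specific bound on the total information gain $\sum_t I_t(\theta^\star;L_t)$.

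As in the proof of Theorem~\ref{th:main_discrete}, the definition of $\rho_t$ combined with Cauchy--Schwarz immediately yields $R_T \le \sqrt{\EE{\sum_{t=1}^T \rho_t}\cdot \EE{\sum_{t=1}^T I_t(\theta^\star;L_t)}}$; this step depends only on the definition of the \IR and is insensitive to the noise model. Lemma~\ref{lem:ir_gaussian} gives $\rho_t \le \min\{d, 2(1+\log K)\}$ almost surely, so the first factor is bounded by $T\cdot \min\{d, 2(1+\log K)\}$.

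For the second factor I would exploit Gaussian conjugacy of the prior and likelihood. In this setting the conditional mutual information has a closed form of the type $I_t(\theta^\star; L_t) = \tfrac{1}{2}\log(1+\sigma^2_{t,A_t})$, where $\sigma^2_{t,A_t} = \varphi(X_t,A_t)^{\top}\bar\Sigma_t^{-1}\varphi(X_t,A_t)$ is the posterior variance of the predicted mean as introduced in the proof of Lemma~\ref{lem:ir_gaussian}. The rank-one matrix determinant identity $\det(\bar\Sigma_{t+1}) = \det(\bar\Sigma_t)(1+\sigma^2_{t,A_t})$ then telescopes the sum of logarithms into $\tfrac{1}{2}\log(\det(\bar\Sigma_{T+1})/\det(\bar\Sigma_1))$. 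Finally, combining $\bar\Sigma_1 = (\sigma^2/\lambda)I_d$ with the trace estimate $\mathrm{tr}(\bar\Sigma_{T+1}-\bar\Sigma_1) \le TB^2$ and applying Jensen's inequality to $\log\det$ (equivalently AM--GM on eigenvalues) yields the standard log-determinant bound
\begin{equation*}
 \EE{\sum_{t=1}^T I_t(\theta^\star;L_t)} \le \tfrac{d}{2}\log\!\left(1 + \frac{T\lambda B^2}{d\sigma^2}\right).
\end{equation*}
Plugging both factors back into the Cauchy--Schwarz inequality produces the claimed regret bound.

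The main obstacle I anticipate is bookkeeping rather than conceptual: the paper's scaling of $\bar\Sigma_t$ (which absorbs both $\sigma^2$ and $\lambda$), together with the precise conversion between the Shannon mutual information used in Equation~\eqref{eq:RIR} and the variance-based quantity appearing in the proof of Lemma~\ref{lem:ir_gaussian}, must be tracked consistently in order to recover the exact numerical constant in the statement. Once these normalizations are fixed, the proof is a routine combination of the information-ratio decomposition with the classical elliptic-potential argument for linear bandits, and no new ideas beyond Lemma~\ref{lem:ir_gaussian} and the information-theoretic scheme of Theorem~\ref{th:main_discrete} are required.
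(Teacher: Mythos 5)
Your overall architecture is the paper's: the Cauchy--Schwarz decomposition of Equation~\eqref{eq:cs}, the bound $\rho_t\le\min\{d,2(1+\log K)\}$ from Lemma~\ref{lem:ir_gaussian}, and a potential argument to control the cumulative information gain. The only divergence is in the last step, and it is worth spelling out because it hides the one place where the proposal could actually go wrong. The paper never works with the exact Shannon mutual information in this section: the proof of Lemma~\ref{lem:ir_gaussian} identifies $I_t(\theta^\star;L_t)$ with the variance-type quantity $\EEt{(\bar{\ell}_t(X_t,A_t)-\ell(\theta^\star,X_t,A_t))^2}=\EEt{\sigma^2_{t,A_t}}$ (Equation~\eqref{eq:gauss_variance}), and the corollary's proof keeps that same surrogate as the second Cauchy--Schwarz factor, bounding $\sum_{t}\sigma^2_{t,A_t}\le 2d\log\pa{1+T\lambda B^2/(d\sigma^2)}$ via the elliptical potential lemma. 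You instead substitute the closed form $I_t=\tfrac12\EEt{\log(1+\sigma^2_{t,A_t})}$ and telescope the log-determinants directly; that telescoping is precisely the content of the elliptical potential lemma (the rank-one determinant identity plus $x\le 2\log(1+x)$ for $x\in[0,1]$), so the two computations are the same argument in different clothing.

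The caveat you dismiss as ``bookkeeping'' is, however, a genuine consistency requirement rather than a normalization chase. Since $\tfrac12\log(1+x)\le\tfrac{x}{2}$, the exact mutual information is smaller than the variance surrogate, so the ratio $\rho_t$ defined with the exact MI in the denominator is \emph{larger} than the quantity that the proof of Lemma~\ref{lem:ir_gaussian} actually controls; when $\sigma^2_{t,a}=\varphi(X_t,a)\transpose\overline{\Sigma}_t^{-1}\varphi(X_t,a)$ is not small (e.g.\ in early rounds with $\lambda B^2/\sigma^2\gg 1$) the gap between the two is not a universal constant. Pairing the lemma's numerical bound on the variance-based ratio with an exact-MI bound on the information sum therefore does not, as written, upper bound $\sum_t\EEt{r_t}=\sum_t\sqrt{\rho_t I_t}$, since the two factors are being measured in different currencies. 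The fix is exactly what the paper does: use the same (variance-based) notion of information gain in both Cauchy--Schwarz factors, at which point your log-determinant computation reduces to the cited elliptical potential bound and the stated constant $\sqrt{2dT\min\{2(1+\log K),d\}\log(\cdot)}$ falls out.
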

\begin{proof}
By using the form of $I_t(\theta^*;L_t) $ for the Gaussian likelihood and applying the regret decomposition of Equation (\ref{eq:cs}), we get 
	\begin{align}\label{decomp_square}
		&\EE{R_T} \le \sqrt{ \sum_{t=1}^T \EEt{\rho_t} \EE{   \sum_{t=1}^T \pa{\bar{\ell}_t(X_t, A_t)  - \ell(\theta^*, X_t, A_t)  }^2} }.
	\end{align}

Then, by Lemma~\ref{lem:ir_gaussian} and (\ref{eq:gauss_variance}),
\[ \EE{R_T} \le \sqrt{ \min\{2   (1+\ln K) ,d\}T\sum_{t=1}^{T} \EEt{  \varphi(X_t, A_t)\transpose \Sigma_{t}^{-1} \varphi(X_t, A_t)   }}   \]

The term $ \sum_{t=1}^{T} \EEt{  \varphi(X_t, A_t)\transpose \Sigma_{t}^{-1} \varphi(X_t, A_t)   }$ can be addressed by using the elliptical potential lemma (e.g. Lemma 19.4. in \cite{LSz19book}), which is widely used in the analysis of linear bandits. Applying this lemma, we get 

\begin{align*}
	\sum_{t=1}^{T}    \varphi(X_t, A_t)\transpose \Sigma_{t}^{-1} \varphi(X_t, A_t) \le 2d  \log\pa{1+ \frac{ T\lambda B^2 }{d \sigma^2}},
\end{align*}
which completes the proof. 
\end{proof}

\end{document}